\newtheorem{theorem}{Theorem}[section]
\newtheorem{example}{Example}[section]
\newtheorem{definition}{Definition}[section]
\newtheorem{remark}{Remark}[section]
\newcommand{\E}[2]{\mathbb{E}_{#1}\left[#2\right]}
\newcommand{\msscL}{\textsc{Min-Sum Set Cover}}
\newcommand{\pbpL}{\textsc{Pandora's Box}}
\lstdefinelanguage{CustomC}{
  language=C,
  morekeywords={[2]PartialSolution, Problem, GeneratorParameters}, 
  keywordstyle=\color{blue},                      
  keywordstyle=[2]\color{blue}\bfseries,          
  keywordstyle=[3]\color{purple}                  
}
\lstdefinestyle{customc}{
  language=CustomC,
  basicstyle=\ttfamily\small,
  commentstyle=\color{gray},
  stringstyle=\color{red},
  showstringspaces=false,
  numberstyle=\tiny\color{gray},
  numbersep=5pt,
  frame=single,
  breaklines=true,
  numbers=none,
  stepnumber=1,
  tabsize=2,
  showspaces=false,
  breakatwhitespace=true,
  escapeinside={(*@}{@*)}
}
\title{Teaching Transformers to Solve Combinatorial Problems through Efficient Trial \& Error
}
\author{
  Panagiotis Giannoulis$^{1}$ \quad
  Yorgos Pantis$^{2,3}$ \quad
  Christos Tzamos$^{2,3}$ \\
  \\
  $^{1}$ National Technical University of Athens, Greece \\
  $^{2}$ National and Kapodistrian University of Athens, Greece \\
  $^{3}$ Archimedes, Athena Research Center, Greece
}
\begin{document}

\maketitle

\begin{abstract}
Despite their proficiency in various language tasks, Large Language Models (LLMs) struggle with combinatorial problems like Satisfiability, Traveling Salesman Problem, or even basic arithmetic. We address this gap through a novel trial \& error approach for solving problems in the class NP, where candidate solutions are iteratively generated and efficiently validated using verifiers. We focus on the paradigmatic task of Sudoku and achieve state-of-the-art accuracy (99\%) compared to prior neuro-symbolic approaches. Unlike prior work that used custom architectures,  our method employs a vanilla decoder-only Transformer (GPT-2) without external tools or function calling. Our method integrates imitation learning of simple Sudoku rules with an explicit Depth-First Search (DFS) exploration strategy involving informed guessing and backtracking. Moving beyond imitation learning, we seek to minimize the number of guesses until reaching a solution. This is achieved using depth-1 guessing, showing empirically that almost all Sudoku can be solved using the puzzle's rules with at most one guess. We provide a rigorous analysis of this setup formalizing its connection to a contextual variant of \textit{Min--Sum Set Cover}, a well-studied problem in algorithms and stochastic optimization.
\end{abstract}

\section{Introduction}\label{main:intro}

Large Language Models (LLMs) based on the Transformer architecture have shown remarkable performance in a wide range of Machine Learning tasks, achieving state-of-the-art results on language understanding and generation benchmarks~\citep{vaswani2017attention, brown2020language, radford2019language}. Trained to predict the next token in large text corpora, these models acquire strong reasoning abilities, but whether they perform genuine logical reasoning or rely on pattern-matching remains under debate~\citep{valmeekam2023planbench, dziri2023faith}. Prompting strategies such as Chain-of-Thought and its many variants~\citep{nye2021show, wei2022chain, yao2023tree, wang2022self, besta2024graph, chen2022program, khot2022decomposed, zhou2022least, madaan2023self} have enabled LLMs to exhibit multi-step reasoning capabilities and solve even complex mathematical problems~\citep{hendrycks2021measuring}.  
%
%
Yet, LLMs still struggle 
in more structured domains like arithmetic and combinatorial optimization.  
Controlled evaluations on synthetic tasks, including algorithmic puzzles, formal grammars, logic-based games, and arithmetic problems, have revealed 
consistent failure modes of current models and methods~\citep{liu2022transformers, li2023emergent, nanda2023emergent, allen2023physics, ye2024physicsa, ye2024physicsb, lee2023teaching}. At the same time, the literature highlights that Transformer-based models have the potential to leverage \\

\rule{2in}{0.4pt}\\
\noindent {\small Authors listed alphabetically; equal contribution.}\\
\noindent {\small Code available at \url{https://github.com/gpt-reasoning/ReasoningCombinatorials}}

internal mechanisms like attention to capture combinatorial structure without specialized heuristics or external tools. However, appropriately training them to display this behavior remains a major challenge and an active topic of research in robustness and generalization. 


In this work, we focus on improving reasoning in LLM-style Transformers for combinatorial problems. We develop a framework for teaching Transformers to identify a path towards a solution to a combinatorial problem through effectively exploring different alternatives with trial-and-error. Our framework relies on training the model to distinguish valid and invalid solutions and incrementally builds a solution with clever guesses. This approach applies in very general settings and can be used to solve any combinatorial problem that belong to the NP-class, like Satisfiability, Clique detection or the Traveling Salesman Problem, where verifying a valid solution can be done efficiently.

Our primary testbed in this work is Sudoku, chosen for its well-defined rules, extensively studied difficulty levels, and widespread use as a reasoning benchmark. 
A standard Sudoku puzzle consists of a $9 \times 9$ grid, subdivided into nine $3 \times 3$ boxes. The grid is partially filled with digits from 1 to 9,  and the goal is to complete it subject to the following rules:
(i) each cell contains exactly one digit,
(ii) each row contains every digit exactly once, (iii) each column contains every digit exactly once, and (iv) each box contains every digit exactly once.
We focus on the standard Sudoku puzzles, while our approach can be directly applied to its generalized form with \( n \times n \) grids which is NP-complete~\citep{yato2003complexity} and thus equivalent to any problem in the class NP.

Solving Sudoku puzzles without external tools is extremely challenging even for state-of-the-art industrial-scale LLMs. We evaluated several models, including OpenAI GPT~\citep{achiam2023gpt}, Google Gemini~\citep{team2023gemini}, and DeepSeek~\citep{guo2025deepseek}, on randomly generated Sudoku puzzles. As shown in Table~\ref{tab:frontmodels}, none of these models were able to produce correct solutions. We tested both general-purpose LLMs and reasoning-focused variants, since Sudoku demands logical reasoning. General-purpose models generated answers quickly, whereas reasoning-oriented models ran significantly longer as they attempted step-by-step strategies mirroring human solving patterns. However, once they made an incorrect inference, these models could not backtrack and revise their solutions, often ending with contradictory board states. Additional details are provided in Appendix~\ref{app:front}.

\begin{table}[t!]
    \centering
    \renewcommand{\arraystretch}{1.2} 
    \caption{Performance of five state-of-the-art LLMs on solving Sudoku puzzles in 9$\times$9 format. The first two are general-purpose LLMs, while the remaining three are reasoning-focused models.}
    \label{tab:frontmodels}
    \begin{tabular}{lccc}
        \toprule
        \textbf{Model} & \textbf{Board Accuracy} & \textbf{Cell Accuracy} & \textbf{Time (sec./puzzle)} \\
        \hline
        GPT--4o         & 0.00\% & 44.62\% & 1.60 \\
        Gemini--1.5 Pro & 0.00\% & 31.70\% & 3.20 \\
        GPT--o3 mini    & 0.00\% & 30.99\% & 265.70 \\
        Gemini--2.5 Flash & 0.00\% & 20.00\% & 250.80 \\
        DeepSeek--R1    & 0.00\% & 41.48\% & 510.40 \\
        \bottomrule
    \end{tabular}
\end{table}

Beyond these general-purpose models, several methods have been proposed in the literature to solve Sudoku puzzles via Deep Learning although often relying on specialized architectures. \citep{mladenov2011solving} investigates a Neural Network (NN)-based approach based on Hopfield
Networks as an integer optimization tool, addressing its convergence limitations through a coprocessor
NN designed to solve linear programming problems. \citep{palm2018recurrent} introduces the Recurrent Relational Network (RRN), a generalization of the Relational Network~\citep{santoro2017simple}, to evaluate multi-step relational reasoning. Their method involves multiple rounds of message passing
between Sudoku puzzle cells to arrive at a solution. \citep{yang2023learning} explores solving Sudoku puzzles using a Recurrent Transformer model, integrating Sudoku’s constraints directly into both the model architecture and the training process. 

While effective in solving even difficult puzzles, these prior models diverge from the LLM paradigm, and their architectural customizations limit the transferability of their insights to combinatorial reasoning with LLMs.
Moreover, these methods follow a direct solution approach: given the initial state of the puzzle, the network attempts to predict the entire solution in a single forward pass. This design introduces critical limitations. First, there is no mechanism for error recovery, so if the model generates an incorrect solution, it cannot easily revise or correct its output. Second, the reasoning process remains opaque. Unlike humans, who solve puzzles step-by-step, these models do not expose their intermediate decision-making, resulting in behavior that is difficult to interpret or trust.

Closer to our work,~\citep{shahcausal} study how a causal language model (GPT-based) can reliably solve Sudoku puzzles. By reframing Sudoku as a next-token prediction task over transcripts of human-like reasoning based on known Sudoku strategies, their Causal Transformer learns to imitate and apply these heuristics incrementally, typically executing simpler strategies before tackling more complex ones. This approach showcases the promising capability of Autoregressive Transformers to successfully internalize and deploy multi-step reasoning procedures in a more interpretable way. However, their proposed method trains the Transformer on a structured sequence of logical steps, which do not allow the model to make mistakes. This inherently limits its problem-solving capability to strategies present in the training data whereas a large fraction of puzzles are not solvable by simple logic steps. As such, its performance is significantly worse than previous task-specific models and even for puzzles solvable by logic, the model is still not perfect potentially due to the complexity of applying the given chain of heuristics. 

Another line of works investigates the reasoning capabilities of Masked Diffusion Models (MDMs) in language generation through the Sudoku task. \citep{yebeyond} show the potential of MDMs to outperform autoregressive models in reasoning and planning with fewer parameters, highlighting their potential impact on downstream applications. \citep{kimtrain} investigate the same models, exploring sampling strategies and token ordering effects, and propose an adaptive inference method that further improves performance. While~\citep{shahcausal} is more closely related to our work, our results are also comparable to those of~\citep{kimtrain}, who outperformed~\citep{yebeyond}. This demonstrates that our methodology surpasses all existing state-of-the-art approaches for solving combinatorial tasks such as Sudoku.

\section{Methodology}\label{main:method}
In this work, we introduce a novel framework for reasoning with Transformers. Our method combines imitation learning with trial-and-error search, enabling models to go beyond learned heuristics. For combinatorial problems such as Sudoku, a verifier exists that can check whether a partial or complete assignment satisfies the problem’s rules; we therefore use this verifier to validate the correctness of each attempted instance. Specifically, when no further logical steps can be applied, the model is trained to make informed guesses, just as humans do. Crucially, it can backtrack when a guess eventually leads to a dead end, and attempt alternative continuations in a Depth-First Search (DFS) manner. By coupling simple learned rules with the flexibility to explore, while continuously relying on the verifier for correctness checks, our method navigates the solution space efficiently.
The resulting Causal Transformer solves puzzles robustly and interpretably, showcasing emerging behaviors of progressive reasoning and self-correction.

\begin{figure}[b!]
    \centering    \includegraphics[width=0.7\textwidth]{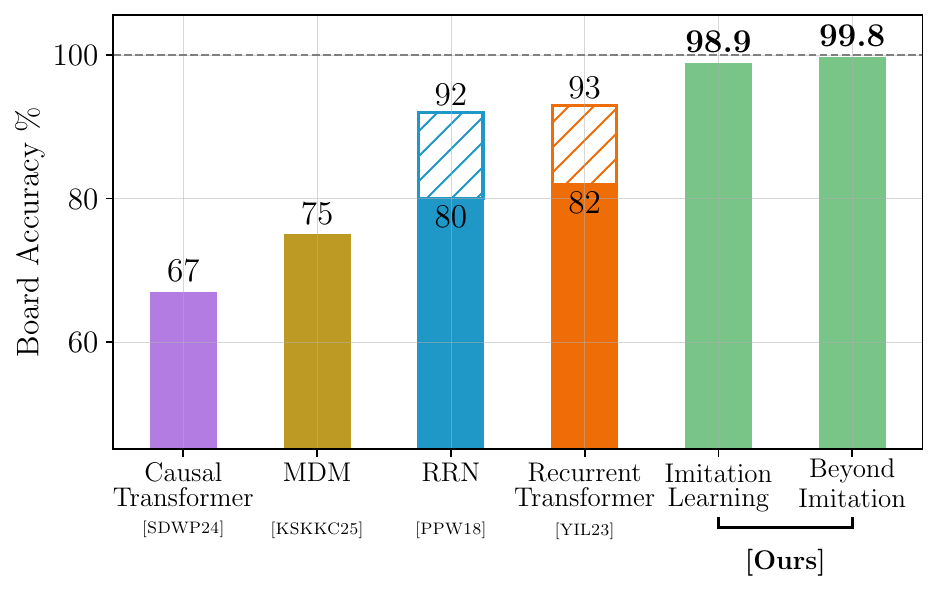}
    \caption{Comparison of board accuracy with previous state-of-the-art models on 100K randomly generated Sudoku. As some models are trained on a different dataset, we retrain them using our random Sudoku generator and report their increased accuracy with shaded bars.}
    \label{fig:results}
\end{figure}

Our experiments showcase the capability of Transformers to learn this algorithmic behavior. Specifically, we observe that the models effectively learn to fill the values based on the Sudoku rules. When direct rule application is exhausted, the model guesses a candidate value and recovers correctly from incorrect guesses by backtracking when a dead end is identified, as in the DPLL algorithm~\citep{davis1960computing, davis1962machine}. As a result of these learned abilities, our method achieved near-perfect accuracy on Sudoku puzzles (99\%), demonstrating state-of-the-art performance for a NN-based Sudoku solver (see Figure~\ref{fig:results}).

Our framework applies to any NP problem, whose solutions are hard to find but easy to verify. To demonstrate its generality, we also apply it to the canonical NP-complete problem of Satisfiability~\citep{Cook}, specifically the 1-in-3 SAT variant, where each clause contains three literals and exactly one must be true. Even in this task, our trained Transformer is able to correctly find solutions 99\% of the time. Note that 1-in-3 SAT is a canonical NP-complete problem, and by the Cook-Levin theorem~\citep{Cook}, any problem in NP can be written as 1-in-3 SAT. However, some problems may benefit from problem-specific optimizations and therefore, we provide a library in Appendix~\ref{app:librarysat} that lets one generate transcripts for any problem in NP by providing the appropriate components.

A key ingredient for our framework is generating appropriate training transcripts via a DFS routine that guesses partial assignments, checks them for correctness, and either recurses or backtracks as needed. We represent the problem-solving process as a sequence of discrete actions and use an action-level tokenization. This allows us to use a multiple-target approach during training, where the output of each token is treated as a distribution of probabilities over the next possible actions (soft-labels), enabling richer learning signals compared to single-target approaches. We provide an efficient Python library (written in C) for generating the transcripts and the corresponding targets (see Appendix~\ref{app:librarysat}). The library can be easily extended to provide reasoning transcripts for any combinatorial problem.

\paragraph{Beyond imitation learning.} Whereas earlier studies struggled with accurately solving Sudoku, our approach can teach Transformers to solve combinatorial problems with near-perfect accuracy. Our thesis is that once LLMs can learn to solve different problems correctly, the next milestone is to also minimize the reasoning time. In fact, it is well-known that any combinatorial problem can be solved given sufficient time. As long as Transformers can reliably detect valid solutions to a problem, they can also solve it by randomly generating tokens and checking whether the produced transcript contains a solution. According to the ``infinite-monkey'' theorem~\citep{Monkey}, an infinitely large sequence of random tokens will almost-surely contain a valid solution to the problem instance.


For the second part of our work, we aim to use the predictive abilities of Machine Learning to efficiently navigate the space of solutions with the goal of minimizing the expected number of reasoning steps. While in the first part, our goal was to imitate how humans solve problems systematically, in this stage we aim to do even better. Humans themselves are limited in their ability to solve combinatorial problems and the DFS method outlined above only works for small instances; it quickly becomes intractable as problem size increases. In fact, we do not know of efficient methods for solving most problems in the class NP which is reflected by the major open problem of P~vs~NP. An important promise of using Machine Learning for combinatorial problems is to develop data-driven methods for solving problems more efficiently than hand-crafted methods can.

Minimizing the time to solution is known as the \emph{min‑sum} objective in combinatorial optimization. To perform a theoretically grounded analysis of this setting, we make two important assumptions: (i) \textit{one‑level guessing}, where we only allow the method to make a single guess and any second guess counts as failure, and (ii) \textit{non-adaptive guessing}, where for every new guess point we proceed without retaining any memory of previous attempts. Under these restrictions, any instance solvable with one guess will eventually be solved in finite expected time. Remarkably, on randomly generated Sudoku boards, a single guess suffices to correctly solve $\sim$99.8\% of them: after applying the standard set of simple rules, nearly every board admits a \emph{backdoor} i.e. a guess that, when followed by those same rules, completes the puzzle. In this setup, optimizing the probability of correctly guessing the backdoors directly corresponds to minimizing the solution length. We formulate this problem as a contextual variant of the $\msscL$, a problem commonly studied in algorithms and stochastic optimization (see Appendix~\ref{app:theorymssc}). Through this connection, we identify a novel loss function that directly captures the length of the solution and outperforms the standard Cross-Entropy loss, both theoretically and empirically. Our experiments show that for the Sudoku task the solution lengths of the trained model compare favorably to those of an oracle that knows the correct values of all remaining cells without knowing which are backdoors.

Despite its simplicity, this setup remains surprisingly effective, \emph{even outperforming the previous multi-level guessing method}. This advantage arises because, in our beyond imitation learning approach, training uses only successful trajectories, i.e., those that contain the correct guess. While imitation learning provides richer supervision by including both correct and incorrect guess tokens, it also encourages the model through next-token prediction to continue generating incorrect tokens, rather than identifying the correct path. By excluding incorrect tokens from the training data, our approach goes beyond imitation learning and concentrates learning on high-value decisions that directly lead to a solution.

\paragraph{Experimental setup and puzzle generation.}
All our experiments follow a Causal Transformer (GPT-2 variant~\citep{radford2019language}) that is identical to that of~\citep{shahcausal} to allow for a direct comparison. The model comprises 8 layers, each with 8 self-attention heads, an embedding dimension of 576, and a feed-forward network with hidden dimension 3456 (6 $\times$ the embedding size). The model also employs causal masking within its attention layers to prevent access to future tokens. In total, this model contains 42M parameters.


We train and evaluate our model using randomly generated Sudoku. We deviate from previous works that used pre-existing datasets, such as the one provided by~\citep{david_g__radcliffe_2020}, striving for a clean and interpretable distribution of Sudoku puzzles. In particular, our generator works exactly as follows: 
\begin{quote}
Out of the {6,670,903,752,021,072,936,960} distinct Sudoku boards that exist, it picks one uniformly at random and removes entries based on a uniformly random permutation of the 81 cells, skipping an entry if its removal yields $>1$ solution. 
\end{quote}
This process results in a nearly identical distribution of puzzles to the Kaggle dataset used in previous work but arises in a very principled way. Relying solely on static datasets can introduce bias into the evaluation, as these datasets may represent only a selective portion of the true distribution. Moreover, having a generator instead of a dataset is more appropriate for combinatorial problems allowing stream-like training and avoiding overfitting issues as every puzzle is only seen once. We release an efficient generator as a Python library (written in C), named \emph{SudokuPy}\footnote{Code available at \url{https://github.com/gpt-reasoning/sudokupy}}, but the same distribution can also be reproduced using the description above. Full functionality details appear in Appendix~\ref{app:sg} along with a description of the random generator for random instances of 1-in-3 SAT that we use.

\paragraph{Summary of our contributions.} Our key contributions can be summarized as follows:

\textbf{A novel trial-and-error reasoning framework.} We introduce a principled trial-and-error reasoning framework for solving combinatorial puzzles using LLM-style Causal Transformers. This framework leverages two key ingredients:\newline
(i) Action-Level Tokenized Transcript Generation: A method for representing the problem-solving process as a sequence of discrete actions. Importantly, during training, we utilize a multiple-target approach, where the output of each token is treated as a distribution of probabilities over the next possible actions (soft-labels), enabling richer learning signals compared to single-target approaches.\newline
(ii) Exploration Beyond Heuristics: The ability to explore the solution space through informed guessing and backtracking, going beyond reliance on handcrafted heuristics.

\textbf{State-of-the-art performance on challenging combinatorial tasks.} Our LLM-style architecture achieves a near-perfect accuracy of 99\% on Sudoku puzzles, surpassing previous NN methods. Furthermore, we attain 99\% accuracy on the 1-in-3 SAT task showcasing the framework's versatility (see Appendix~\ref{app:sat}). 

\textbf{Principled optimization of the guessing process.} We theoretically analyze and optimize the crucial guessing component of our framework by reframing it as a contextual $\msscL$ problem. 
Through this connection, we identify a novel loss function that directly captures the length of the solution and outperforms the standard Cross-Entropy loss, both theoretically and empirically.

\textbf{A standardized benchmark with an efficient puzzle generation library.} To facilitate future research, we introduce a fast and lightweight Python library for generating uniformly random Sudoku puzzles from the entire solution space. This enables streaming-like training and provides a standardized benchmark for evaluating and comparing different reasoning models.

\section{Imitation learning -- DFS algorithm}\label{main:dfs}
\subsection{Method overview}
\paragraph{Baseline approach and rule logic.} In~\citep{shahcausal}, the authors train a GPT-2 variant to solve Sudoku puzzles using supervised imitation learning. Their training data consists of step-by-step solution transcripts that progressively apply seven human-crafted strategies (namely the Lone Single, Hidden Single, Naked Pair, Naked Triplet, Locked Candidate, XY Wing, and Unique Rectangle), ordered from simpler to more complex.
For token encoding, each cell is represented using three tokens: ($r$, $c$, $v$), where $r$ and $c$ denote the cell's row and column indices, and $v$ is the digit placed at that position.

In our approach, rather than incorporating complex human-designed strategies into the training transcripts, we rely solely on applying the four fundamental Sudoku rules. These essentially correspond to the heuristics Lone Single and Hidden Single. Specifically, during rule-based inference, we fill cell values according to the following constraints:
(i) if an unfilled cell $(r, c)$ can only be filled with one value, fill it,
(ii) if in a row $r$ the digit $v$ can only be filled in a single position, fill it,
(iii) if in a column $c$ the digit $v$ can only be filled in a single position, fill it,
(iv) if in a $3 \times 3$ box the digit $v$ can only be filled in a single position, fill it.

\paragraph{Encoding choice.}
We adopt a novel tokenization strategy, encoding each move as a single 3-digit number ranging from 111 to 999, with the first digit representing the row r the second the column c and the third the value v. This differs from~\citep{shahcausal}, that use 3 separate tokens for the row, column and value. Our action-level representation reduces the input sequence length by a factor of 3, leading to faster training iterations and more efficient solution generation during evaluation. 

\paragraph{Multiple targets.}
In standard language modeling tasks, each training example consists of a fixed sentence, and the model is trained to predict a single next token at each step. In contrast, combinatorial puzzle transcripts often permit multiple valid next moves at any given point. To leverage this property, we modify the loss function to support multiple next-token predictions, rather than assuming a single deterministic target. Specifically, instead of using the standard Cross-Entropy loss \( -\log p_i \), we use the sum of Cross-Entropies across all valid targets: \( -\sum_{i \in S} \log p_i \), where \( S \) denotes the set of all valid next tokens at that step. This enables the model to treat one input as having multiple plausible continuations, enriching its exposure to diverse sequence paths during training. Our experiments demonstrate that this adjustment facilitates faster learning of the task (see Figure~\ref{fig:tokens}).

\begin{figure}[t!]
    \centering    \includegraphics[width=1.\textwidth]{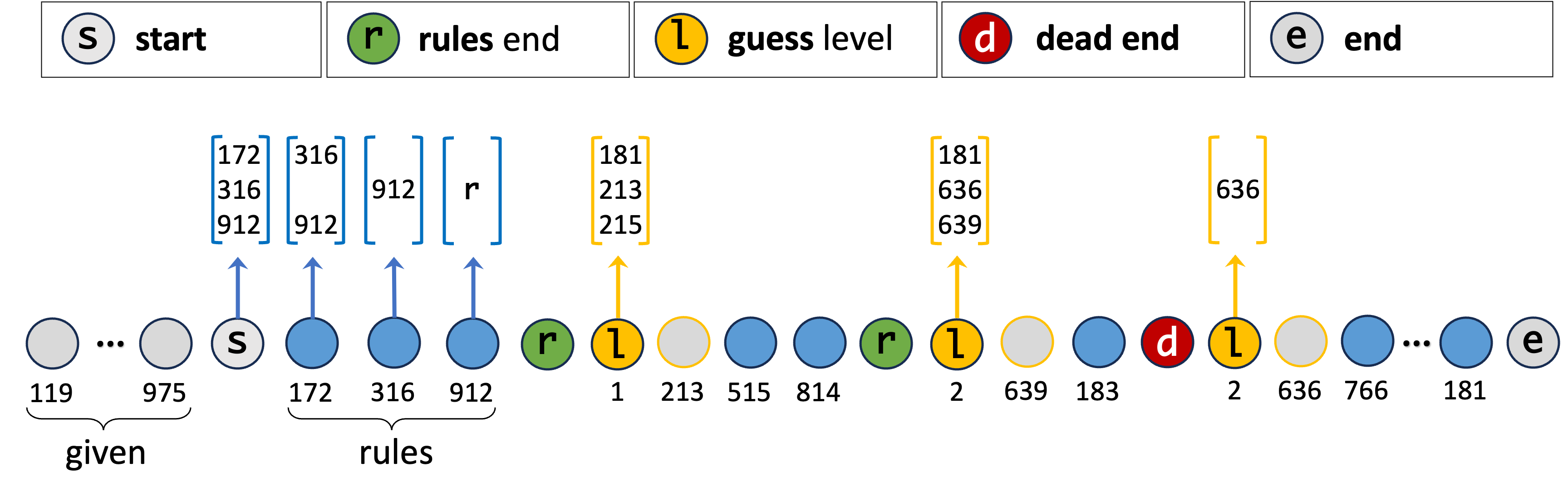}
    \caption{Example of a training transcript. Values in blue brackets indicate the multiple valid labels for the output of each token during the rule-application step. Yellow brackets show the set of valid candidates when the model reaches a guess token at a given level. The selected guess is shown as a gray circle with yellow outline. If it leads to a dead end, a follow-up guess is made at the same cell.}
    \label{fig:transcript}
\end{figure}

\paragraph{Algorithm.}
Our algorithm is trained using transcription sequences (see Figure~\ref{fig:transcript}) that proceed through the following steps. The input begins with the sequence of moves (tokens) corresponding to the initially filled cells, followed by a ``start'' token $s$. Given this input, the model first attempts to fill all resolvable cells by iteratively applying the basic Sudoku rules. If the board is completed during this phase, the model outputs the ``end'' token $e$. Otherwise, it outputs a ``rules end'' token $r$, indicating that no further cells can be filled using rules alone and that guessing is required.

At this point, the model outputs the current ``guess level'' token (an integer from 1 to 99 representing the number of active guesses) and proceeds to select an informed guess from the remaining valid candidates. This guessed move becomes the next token. The model then re-enters the rule-application phase and fills in as many cells as possible based on the updated board.

If unresolved cells remain, the model increases the guess level and repeats the guess-and-fill process. In the event of a conflict (e.g., no valid values exist for a cell in the current state of the board), the model outputs a ``dead end'' token $d$ to signal failure at the current search path. It then backtracks to the last guess level and selects a new guess different from the one that led to the conflict. This guess is made on the same cell by selecting a different potential value.

This combination of rule application, informed guessing, and backtracking continues recursively with a DFS approach until a valid solution is found or a maximum-allowed sequence length is reached. At inference, we get the solution choosing for any cell its last assigned value in the generated sequence.


\subsection{Benchmarks}\label{main:datasets}
We evaluate our model across four distinct datasets to benchmark its performance and ensure generalization:
(i) Random: our own puzzle generator, used to produce both training data and a held-out 100K test set,
(ii) Kaggle unfiltered: the Kaggle dataset~\citep{david_g__radcliffe_2020}, a uniformly sampled collection of 3M puzzles, from which we randomly sample 100K for evaluation,
(iii) Kaggle filtered: a 1.9M puzzle subset of the unfiltered Kaggle dataset, curated by~\citep{shahcausal} to include only puzzles solvable using seven Sudoku strategies, and
(iv) RRN: the dataset introduced by~\citep{palm2018recurrent}, consisting of 180K training and an 18K test set, stratified by clue count (ranging from 17 to 34 initial givens).

\begin{figure}[!t]
    \centering
    \begin{minipage}[t]{0.4943\textwidth}
        \centering
        \includegraphics[width=\textwidth]{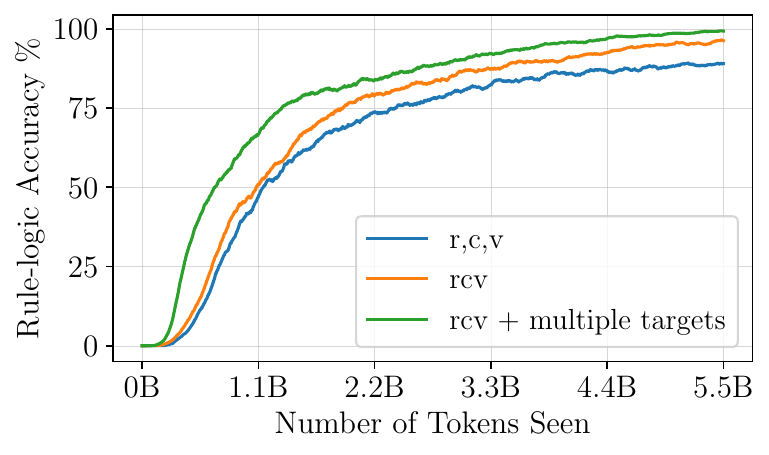}
        \caption{Rule-logic accuracy during training for three model variants differing in token encoding and loss function.}
        \label{fig:tokens}
    \end{minipage}
    \hfill
    \begin{minipage}[t]{0.4943\textwidth}
        \centering
        \includegraphics[width=\textwidth]{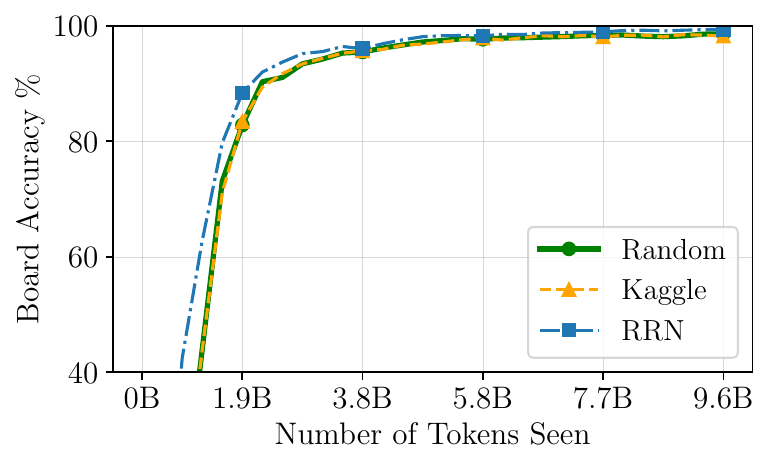}
        \caption{Board accuracy during training on samples from our generator, evaluated on Random, Kaggle unfiltered, and RRN datasets.}
        \label{fig:datasets}
    \end{minipage}
\end{figure}

\subsection{Results}\label{main:1results}
We begin by evaluating how effectively the Transformer learns to apply basic Sudoku rules, and examine how our design choices, specifically the action-level token encoding and multi-target loss, affect the speed and efficiency of learning. In this setting, rule-logic accuracy is defined as the model’s ability to correctly fill all cells resolvable by rule-based inference. If even a single such cell is left unfilled or incorrectly filled, the entire board is considered incorrect.  
Figure~\ref{fig:tokens} shows the evolution of rule-logic accuracy during training on our Random dataset, plotted against the number of tokens seen (in billions), which also correlates closely with training time. We compare three configurations: (i) the baseline setup from~\citep{shahcausal} using their triplet tokenization and single-target loss, (ii) our compact encoding with single-target loss, and (iii) our full approach with both compact encoding and multi-target loss. While all configurations eventually achieve high accuracy, our method substantially accelerates learning, with the multi-target variant reaching near-perfect performance much earlier.

We then track the evolution of board accuracy throughout training on the full Sudoku-solving task, across three test sets: Random, Kaggle unfiltered, and RRN.
As shown in Figure~\ref{fig:datasets}, our generated puzzles yield board accuracy curves that closely mirror those from the Kaggle set, rising in parallel throughout training. This suggests that our generator samples uniformly from the puzzle space, similar to the Kaggle dataset, while also offering the advantages of stream-like training and full distribution coverage. In contrast, the RRN dataset appears easier to solve in practice, with board accuracy rising more quickly, which indicates that its puzzles may be biased toward simpler instances.

\begin{table}[t!]
    \centering
    \begin{threeparttable}
    \renewcommand{\arraystretch}{1.17}
    \caption{Board accuracy comparison of neural Sudoku solvers across training and test sets.}
    \label{tab:comparison_results}
    \small 
    \begin{tabular}{lccccc}
        \toprule
        \multirow{3}{*}{\textbf{Method}} & \multirow{3}{*}{\textbf{Train Data}} & \multicolumn{4}{c}{\textbf{Test Data}} \\
        \cmidrule(lr){3-6}
        & & Random & Kaggle & Kaggle & RRN\\
        & & & \makecell{unfiltered} & \makecell{filtered} & \\
        \midrule
        \multirow{2}{*}{RRN \citep{palm2018recurrent}} & RRN & 79.30\% & 79.60\% & 90.70\% & 98.90\%\\
        & Random & 92.40\% & 92.40\% & 98.30\% & 97.90\%\\ \hline
        \multirow{2}{*}{Recurrent Transformer \citep{yang2023learning}} & RRN & 82.20\% & 82.10\% & 93.90\% & \textbf{99.50\%}\\
        & Random & 92.60\% & 92.50\% & 99.30\% & 99.00\%\\ \hline
        Causal Transformer~\citep{shahcausal} & Kaggle filt. & 67.16\%\tnote{$\dagger$} & 67.16\% & 87.18\% & 94.80\%\\ 
        \hline
        MDM~\citep{kimtrain} & Kaggle filt. & 74.97\%\tnote{$\dagger$} & 74.97\% & 89.49\% & -\tnote{$\ddagger$}\\
        \hline
        \textbf{Causal Transformer Trial \& Error (ours)} & Random & \textbf{98.90\%} & \textbf{98.90\%} & \textbf{99.50\%} & 99.40\%\\
        \bottomrule
    \end{tabular}
    \begin{tablenotes}
    \item[$\dagger$]{\hspace{-0.24cm} Accuracy approximated from Kaggle unfiltered performance due to dataset similarity.}
    \item[$\ddagger$]{\hspace{-0.24cm} 
    Accuracy is unreported because the model code has not been released.}
    \end{tablenotes}
    \end{threeparttable}
\end{table}

Regarding training, our model is trained using puzzles generated by our random generator. For the baselines, the Causal Transformer from~\citep{shahcausal} was trained on the Kaggle filtered dataset; the same holds for MDM~\citep{kimtrain}. The RRN~\citep{palm2018recurrent} and Recurrent Transformer~\citep{yang2023learning} models were originally trained on the RRN dataset. For a fair comparison, we also trained these models on our random generator using their released training code.


The following key takeaways emerge from the comparative analysis in Table~\ref{tab:comparison_results}:
first, our trial-and-error Transformer achieves significant improvements over the Causal Transformer baseline~\citep{shahcausal}, with absolute accuracy gains of 31.7\% (Kaggle unfiltered), 12.3\% (Kaggle filtered), and 4.6\% (RRN). Notably, while their model is restricted to puzzles solvable by seven human-crafted strategies (covering only 1.9M of Kaggle’s 3M puzzles), our method generalizes across the full distribution, achieving 98.9\% accuracy on all 3M puzzles.
Also, while domain-specific models like RRN and the Recurrent Transformer achieve strong performance on the RRN benchmark, our method outperforms them by 6-7\% on the more diverse Random and Kaggle datasets, even when these competitors are enhanced by training on our random generator. For example, our model achieves 98.9\% vs. their 92.4-92.6\% on Random dataset.
Finally we see that training on our random generator improves performance on diverse datasets (e.g., RRN’s +13.1\% gain on Random test) while preserving RRN benchmark accuracy (97.9\% vs. 98.9\% when RRN-trained), demonstrating broad coverage without compromising specialization.

\section{Beyond imitation learning -- optimizing guesses}\label{main:guess}

In our previous approach, we used imitation learning to teach Transformers to solve combinatorial problems, achieving near-perfect accuracy. Having reached this milestone, we now shift our focus from correctness to efficiency. Our goal is to leverage the predictive power of Transformers to minimize the expected reasoning time, measured by the number of steps needed to reach a valid solution. To this end, we focus on optimizing the guessing steps, as they constitute the primary source of variability in solution length.

To study this systematically and connect with theoretical foundations, we study a simplified variant of our setup. We impose two constraints: (i) depth-1 guessing, where only a single guess is allowed before restarting; and (ii) non-adaptive policies, where each failed attempt triggers a full restart with no memory of past guesses. This reduced problem closely resembles the $\msscL$ problem, where the objective is to choose elements (guesses) that cover (solve) the problem as early as possible in expectation. We analyze theoretically the connection in Appendix~\ref{app:opt_guess}, where we quantify the performance degradation of non-adaptive methods compared to adaptive ones.


Despite its simplicity, the setup remains surprisingly effective: on randomly generated Sudoku boards, our policy achieves \textbf{99.8\% accuracy} on instances from our random generator; \emph{even beating the previous multi-level guessing method}. This means that 99.8\% of random Sudoku can be solved by first applying a fixed set of simple rules until only a single backdoor guess is required; once that guess is made, the same simple rules are sufficient to complete the solution. This property, firstly uncovered in our work, and is of independent interest. Note that backdoor move refers to such a single guess such that once it is made, the same simple rules can complete the entire solution. Identifying the correct backdoor is non-trivial. Since our model does not know in advance which cell is the backdoor, it often requires multiple guesses before finding the right one (see Figure~\ref{fig:cumsum}).

Our method is illustrated in Figure~\ref{fig:restart_flow}. Unlike our earlier deterministic (argmax) generation policy, here we sample guesses from the model’s token distribution at the guess node. If the sampled guess leads to a full solution via rule application, the process terminates; otherwise, it restarts. Crucially, only successful trajectories, those containing the correct guess, are used for training. This design makes the approach more efficient than our previous imitation learning setup: while imitation learning includes incorrect guess tokens and thus provides richer supervision, it also encourages the model to learn to predict subsequent wrong tokens via next-token prediction, rather than immediately identifying the correct path. By excluding incorrect guesses from the training data, this approach goes beyond imitation learning focuses learning on high-value decisions. The guess node, viewed as a single decision step, may represent either a single action/token or a thinking sequence (see Appendix~\ref{app:opt_guess}). This process continues until a valid trajectory terminates at an ``end'' token, indicating completion. This framework offers a tractable yet powerful setting to study how well LLMs can prioritize high-value guesses under uncertainty.

\begin{figure}[t!]
    \centering    \includegraphics[width=.95\textwidth]{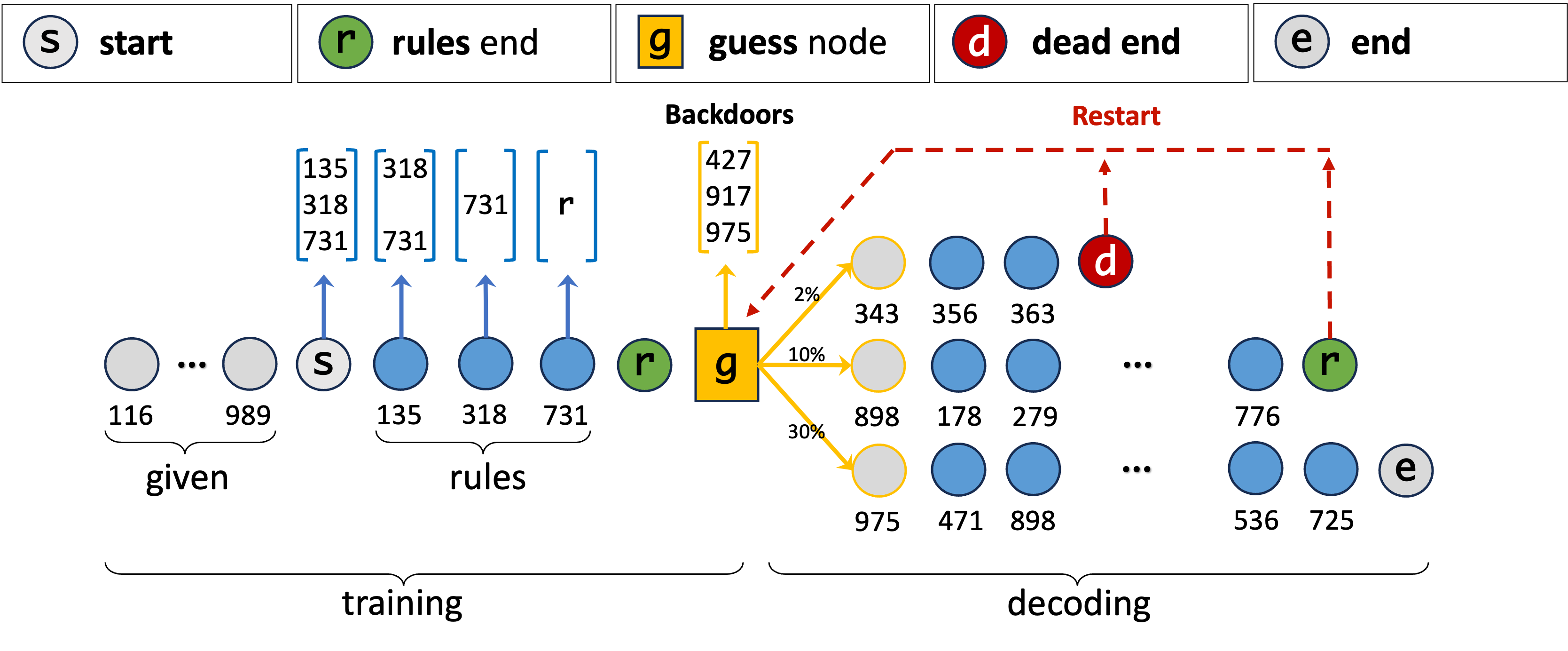}
    \caption{One-level guessing with restarts. The figure illustrates both training and decoding. During decoding, a guess is sampled from the output of the guess node. If the guess is not a backdoor, even if correct, subsequent application of rules fails to produce a solution and a restart is triggered.}
    \label{fig:restart_flow}
\end{figure}

Central to our optimization is the training objective applied at the output of the guess node, which must encourage the model to concentrate probability mass on all valid backdoor moves. 
Specifically, if \( S \) is the set of valid backdoor moves at a given guess node, and \( p_i \) denotes the probability assigned to each such move, we consider different losses for optimization.
The loss corresponding to the expected number of trials is $\mathcal{L}_1 = ( \sum_{i \in S} p_i)^{-1}
$ whereas the standard Cross-Entropy loss with multiple targets is
$\mathcal{L}_2 = -\sum_{i \in S} \log p_i $.
For numerical stability during training, we minimize \( \log \mathcal{L}_1 \), implemented using the log-sum-exp trick. One can interpret the $\mathcal{L}_1$ loss in terms of a geometric distribution: we treat each good guess as a Bernoulli trial where a “success” means finding a valid backdoor. Not every correct guess actually works as a backdoor, so each attempt can succeed or fail. The goal is to minimize the expected number of trials needed to find a working backdoor; analogous to minimizing the expectation in the geometric distribution.

In Appendix~\ref{app:theorymssc}, we theoretically compare the two losses $\mathcal{L}_1$ and $\mathcal{L}_2$ showing that $\mathcal{L}_1$ is theoretically optimal whereas $\mathcal{L}_2$ may result in excessively long sequences.
Figure~\ref{fig:cumsum} compares empirically the performance of the two loss functions, \(\mathcal{L}_1\) and \(\mathcal{L}_2\), against two reference baselines using a cumulative histogram. The x-axis indicates the number of guesses required to reach a valid solution, while the y-axis shows the percentage of instances solved within that number of guesses. To interpret the results, we compare our method against two theoretical oracles: a) upper-bound oracle and b) lower-bound oracle. The upper-bound oracle corresponds to knowing the full Sudoku solution but not which cells are backdoors. This corresponds to guessing the correct value for a random unfilled cell (after the basic rules are applied) but this guess may not be a backdoor resulting in a restart. The lower-bound is similar but does not assume we know the solution. After the basic rules are applied, it chooses a uniformly random (cell, value) combination that does not conflict with the existing values, i.e. appears valid but may not be correct and restarts if the combination is not a backdoor. Note that a backdoor must necessarily be correct as the solution to a Sudoku is always unique. Both oracles serve only as reference curves, one optimistic (upper bound) and one pessimistic (lower bound), to help contextualize the results. They are purely theoretical and not tied to transformers or any specific model architecture.


We observe that both of our loss functions significantly outperform the lower bound. Notably, the theoretically optimal loss \(\mathcal{L}_1\) achieves the best performance, especially in the low-guess regime, with a steeper curve and a median of just 1.5 guesses. The sum of Cross-Entropies loss \(\mathcal{L}_2\) also performs well, closely matching the upper bound and achieving a median of $2.2$ guesses compared to $2.3$ for the upper-bound baseline. These results confirm that both losses guide the model effectively, while \(\mathcal{L}_1\) offers a meaningful advantage in reducing the expected number of trials.

\begin{figure}[t!]
    \centering    \includegraphics[width=0.58\textwidth]{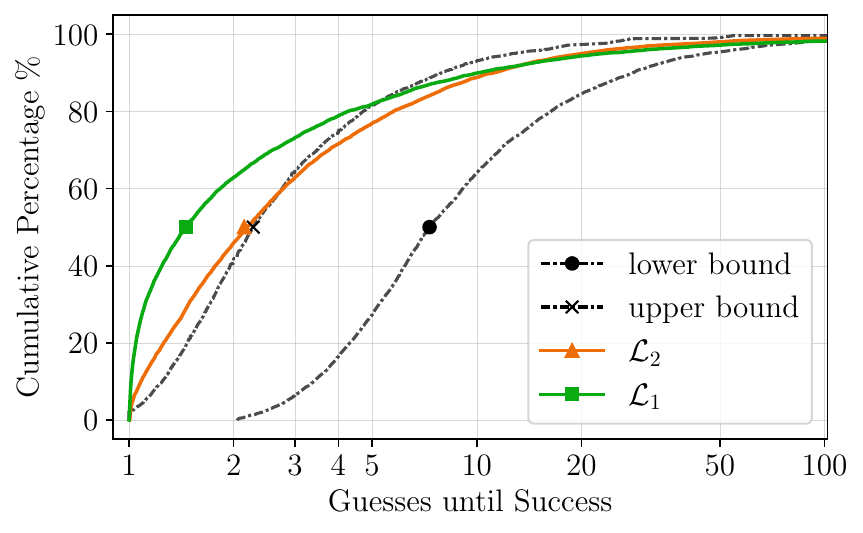}
    \caption{Cumulative histogram of guesses needed to solve Sudoku instances for two losses and two baselines. The lower bound corresponds to random guessing among the remaining candidates for all unfilled cells, while the upper bound corresponds to random guessing the correct value among all unfilled cells without knowing which are backdoors. Remarkably more than 80\% of the puzzles are solved with fewer guesses than the oracle upper bound.}
    \label{fig:cumsum}
\end{figure}


\section{Summary, limitations and future work}\label{main:conclusion}
In this work, we demonstrate that Transformers can be trained to solve combinatorial problems by effectively exploring solution paths through trial and error. Starting with imitation learning, we teach the model to make guesses and backtrack when simple rules are not sufficient which leads to nearly perfect accuracy of 99\% on the Sudoku and the 1-in-3 SAT tasks. Beyond imitation learning, we introduce a principled optimization strategy based on the $\msscL$ problem. This approach addresses the inefficiencies of naive guessing, resulting in optimized inference time. For the Sudoku task it even results in higher board accuracy of 99.8\%. Our approach is directly compatible with any LLM training pipeline as it is based on standard causal Transformers and only requires using our appropriate transcript sequences and corresponding multi-target losses. We hope that these ideas will enable increased performance of LLMs in tackling combinatorial tasks.

While very effective for the Sudoku task, our method for optimizing the number of guesses is limited to depth-1 and non-adaptive policies. Extending the optimization beyond these two constraints is an interesting direction for further research. Even with non-adaptivity, we can still consider randomized multi-level policies as performing a random walk in the search tree of guesses until a valid solution is reached. Optimizing over such policies would automatically identify how deep the search should go, which guess to make next and when to backtrack even if no dead-end has been identified. While the architecture could directly follow that of Figure~\ref{fig:restart_flow} restarting to previous guess nodes if the model chooses so, training the weights appropriately requires carefully thinking about the corresponding loss functions. This is an interesting challenge to explore both theoretically and empirically.

\section*{Acknowledgments and disclosure of funding}
This work has been partially supported by project MIS 5154714 of the National Recovery and Resilience Plan Greece 2.0 funded by the European Union under the NextGenerationEU Program. AWS resources were provided by the National Infrastructures for Research and Technology GRNET and funded by the EU Recovery and Resiliency Facility.

\bibliographystyle{alpha}
\bibliography{references}
\newpage
\section*{NeurIPS Paper Checklist}

\begin{enumerate}

\item {\bf Claims}
    \item[] Question: Do the main claims made in the abstract and introduction accurately reflect the paper's contributions and scope?
    \item[] Answer: \answerYes{} 
    \item[] Justification: The main claims of the paper are stated in the abstract and further highlighted and explained in the introduction. The main body provides all supporting details and evidence.

    \item[] Guidelines:
    \begin{itemize}
        \item The answer NA means that the abstract and introduction do not include the claims made in the paper.
        \item The abstract and/or introduction should clearly state the claims made, including the contributions made in the paper and important assumptions and limitations. A No or NA answer to this question will not be perceived well by the reviewers. 
        \item The claims made should match theoretical and experimental results, and reflect how much the results can be expected to generalize to other settings. 
        \item It is fine to include aspirational goals as motivation as long as it is clear that these goals are not attained by the paper. 
    \end{itemize}

\item {\bf Limitations}
    \item[] Question: Does the paper discuss the limitations of the work performed by the authors?
    \item[] Answer: \answerYes{} 
    \item[] Justification: The limitations are discussed in Section~\ref{main:conclusion}.
    \item[] Guidelines:
    \begin{itemize}
        \item The answer NA means that the paper has no limitation while the answer No means that the paper has limitations, but those are not discussed in the paper. 
        \item The authors are encouraged to create a separate "Limitations" section in their paper.
        \item The paper should point out any strong assumptions and how robust the results are to violations of these assumptions (e.g., independence assumptions, noiseless settings, model well-specification, asymptotic approximations only holding locally). The authors should reflect on how these assumptions might be violated in practice and what the implications would be.
        \item The authors should reflect on the scope of the claims made, e.g., if the approach was only tested on a few datasets or with a few runs. In general, empirical results often depend on implicit assumptions, which should be articulated.
        \item The authors should reflect on the factors that influence the performance of the approach. For example, a facial recognition algorithm may perform poorly when image resolution is low or images are taken in low lighting. Or a speech-to-text system might not be used reliably to provide closed captions for online lectures because it fails to handle technical jargon.
        \item The authors should discuss the computational efficiency of the proposed algorithms and how they scale with dataset size.
        \item If applicable, the authors should discuss possible limitations of their approach to address problems of privacy and fairness.
        \item While the authors might fear that complete honesty about limitations might be used by reviewers as grounds for rejection, a worse outcome might be that reviewers discover limitations that aren't acknowledged in the paper. The authors should use their best judgment and recognize that individual actions in favor of transparency play an important role in developing norms that preserve the integrity of the community. Reviewers will be specifically instructed to not penalize honesty concerning limitations.
    \end{itemize}

\item {\bf Theory assumptions and proofs}
    \item[] Question: For each theoretical result, does the paper provide the full set of assumptions and a complete (and correct) proof?
    \item[] Answer: \answerYes{} 
    \item[] Justification: The theoretical component of this research is clearly presented, with complete and accurate proofs provided in Section~\ref{app:theorymssc}.
    \item[] Guidelines:
    \begin{itemize}
        \item The answer NA means that the paper does not include theoretical results. 
        \item All the theorems, formulas, and proofs in the paper should be numbered and cross-referenced.
        \item All assumptions should be clearly stated or referenced in the statement of any theorems.
        \item The proofs can either appear in the main paper or the supplemental material, but if they appear in the supplemental material, the authors are encouraged to provide a short proof sketch to provide intuition. 
        \item Inversely, any informal proof provided in the core of the paper should be complemented by formal proofs provided in appendix or supplemental material.
        \item Theorems and Lemmas that the proof relies upon should be properly referenced. 
    \end{itemize}

    \item {\bf Experimental result reproducibility}
    \item[] Question: Does the paper fully disclose all the information needed to reproduce the main experimental results of the paper to the extent that it affects the main claims and/or conclusions of the paper (regardless of whether the code and data are provided or not)?
    \item[] Answer: \answerYes{} 
    \item[] Justification: All important details of the experimental setup are presented in the main body, while specific information about the model's training hyperparameters is provided in Section~\ref{app:dfs}.
    \item[] Guidelines:
    \begin{itemize}
        \item The answer NA means that the paper does not include experiments.
        \item If the paper includes experiments, a No answer to this question will not be perceived well by the reviewers: Making the paper reproducible is important, regardless of whether the code and data are provided or not.
        \item If the contribution is a dataset and/or model, the authors should describe the steps taken to make their results reproducible or verifiable. 
        \item Depending on the contribution, reproducibility can be accomplished in various ways. For example, if the contribution is a novel architecture, describing the architecture fully might suffice, or if the contribution is a specific model and empirical evaluation, it may be necessary to either make it possible for others to replicate the model with the same dataset, or provide access to the model. In general. releasing code and data is often one good way to accomplish this, but reproducibility can also be provided via detailed instructions for how to replicate the results, access to a hosted model (e.g., in the case of a large language model), releasing of a model checkpoint, or other means that are appropriate to the research performed.
        \item While NeurIPS does not require releasing code, the conference does require all submissions to provide some reasonable avenue for reproducibility, which may depend on the nature of the contribution. For example
        \begin{enumerate}
            \item If the contribution is primarily a new algorithm, the paper should make it clear how to reproduce that algorithm.
            \item If the contribution is primarily a new model architecture, the paper should describe the architecture clearly and fully.
            \item If the contribution is a new model (e.g., a large language model), then there should either be a way to access this model for reproducing the results or a way to reproduce the model (e.g., with an open-source dataset or instructions for how to construct the dataset).
            \item We recognize that reproducibility may be tricky in some cases, in which case authors are welcome to describe the particular way they provide for reproducibility. In the case of closed-source models, it may be that access to the model is limited in some way (e.g., to registered users), but it should be possible for other researchers to have some path to reproducing or verifying the results.
        \end{enumerate}
    \end{itemize}

\item {\bf Open access to data and code}
    \item[] Question: Does the paper provide open access to the data and code, with sufficient instructions to faithfully reproduce the main experimental results, as described in supplemental material?
    \item[] Answer: \answerYes{} 
    \item[] Justification: Both the code associated with this paper and the Python libraries, described in detail in Sections~\ref{main:method}, and ~\ref{app:sg}, are both publicly available.
    \item[] Guidelines: 
    \begin{itemize}
        \item The answer NA means that paper does not include experiments requiring code.
        \item Please see the NeurIPS code and data submission guidelines (\url{https://nips.cc/public/guides/CodeSubmissionPolicy}) for more details.
        \item While we encourage the release of code and data, we understand that this might not be possible, so “No” is an acceptable answer. Papers cannot be rejected simply for not including code, unless this is central to the contribution (e.g., for a new open-source benchmark).
        \item The instructions should contain the exact command and environment needed to run to reproduce the results. See the NeurIPS code and data submission guidelines (\url{https://nips.cc/public/guides/CodeSubmissionPolicy}) for more details.
        \item The authors should provide instructions on data access and preparation, including how to access the raw data, preprocessed data, intermediate data, and generated data, etc.
        \item The authors should provide scripts to reproduce all experimental results for the new proposed method and baselines. If only a subset of experiments are reproducible, they should state which ones are omitted from the script and why.
        \item At submission time, to preserve anonymity, the authors should release anonymized versions (if applicable).
        \item Providing as much information as possible in supplemental material (appended to the paper) is recommended, but including URLs to data and code is permitted.
    \end{itemize}

\item {\bf Experimental setting/details}
    \item[] Question: Does the paper specify all the training and test details (e.g., data splits, hyperparameters, how they were chosen, type of optimizer, etc.) necessary to understand the results?
    \item[] Answer: \answerYes{} 
    \item[] Justification: All important details of the experimental setup are presented in the main body, while specific information about the model's training hyperparameters is provided in Section~\ref{app:dfs}.
    \item[] Guidelines:
    \begin{itemize}
        \item The answer NA means that the paper does not include experiments.
        \item The experimental setting should be presented in the core of the paper to a level of detail that is necessary to appreciate the results and make sense of them.
        \item The full details can be provided either with the code, in appendix, or as supplemental material.
    \end{itemize}

\item {\bf Experiment statistical significance}
    \item[] Question: Does the paper report error bars suitably and correctly defined or other appropriate information about the statistical significance of the experiments?
    \item[] Answer: \answerYes{} 
    \item[] Justification: Yes, appropriate information about the statistical significance of the experiments is provided throughout the paper. Our experiments are based on synthetic tasks of Sudoku and SAT. We conducted the experiments on very large samples from fixed distributions of valid instances for both problems and report the number of samples taken typically in the order of 100,000. As such the reported percentages are highly accurate in the reported precision.

    \item[] Guidelines:
    \begin{itemize}
        \item The answer NA means that the paper does not include experiments.
        \item The authors should answer "Yes" if the results are accompanied by error bars, confidence intervals, or statistical significance tests, at least for the experiments that support the main claims of the paper.
        \item The factors of variability that the error bars are capturing should be clearly stated (for example, train/test split, initialization, random drawing of some parameter, or overall run with given experimental conditions).
        \item The method for calculating the error bars should be explained (closed form formula, call to a library function, bootstrap, etc.)
        \item The assumptions made should be given (e.g., Normally distributed errors).
        \item It should be clear whether the error bar is the standard deviation or the standard error of the mean.
        \item It is OK to report 1-sigma error bars, but one should state it. The authors should preferably report a 2-sigma error bar than state that they have a 96\% CI, if the hypothesis of Normality of errors is not verified.
        \item For asymmetric distributions, the authors should be careful not to show in tables or figures symmetric error bars that would yield results that are out of range (e.g. negative error rates).
        \item If error bars are reported in tables or plots, The authors should explain in the text how they were calculated and reference the corresponding figures or tables in the text.
    \end{itemize}

\item {\bf Experiments compute resources}
    \item[] Question: For each experiment, does the paper provide sufficient information on the computer resources (type of compute workers, memory, time of execution) needed to reproduce the experiments?
    \item[] Answer: \answerYes{} 
    \item[] Justification: All computational resources are mentioned in Section~\ref{app:dfs}.
    \item[] Guidelines:
    \begin{itemize}
        \item The answer NA means that the paper does not include experiments.
        \item The paper should indicate the type of compute workers CPU or GPU, internal cluster, or cloud provider, including relevant memory and storage.
        \item The paper should provide the amount of compute required for each of the individual experimental runs as well as estimate the total compute. 
        \item The paper should disclose whether the full research project required more compute than the experiments reported in the paper (e.g., preliminary or failed experiments that didn't make it into the paper). 
    \end{itemize}
    
\item {\bf Code of ethics}
    \item[] Question: Does the research conducted in the paper conform, in every respect, with the NeurIPS Code of Ethics \url{https://neurips.cc/public/EthicsGuidelines}?
    \item[] Answer: \answerYes{} 
    \item[] Justification: The research presented in this paper fully complies with all aspects of the NeurIPS Code of Ethics.
    \item[] Guidelines:
    \begin{itemize}
        \item The answer NA means that the authors have not reviewed the NeurIPS Code of Ethics.
        \item If the authors answer No, they should explain the special circumstances that require a deviation from the Code of Ethics.
        \item The authors should make sure to preserve anonymity (e.g., if there is a special consideration due to laws or regulations in their jurisdiction).
    \end{itemize}

\item {\bf Broader impacts}
    \item[] Question: Does the paper discuss both potential positive societal impacts and negative societal impacts of the work performed?
    \item[] Answer: \answerNA{} 
    \item[] Justification: This research primarily aims to advance our understanding of LLMs in combinatorial problems.  As such, it does not have any direct societal impact.
    \item[] Guidelines:
    \begin{itemize}
        \item The answer NA means that there is no societal impact of the work performed.
        \item If the authors answer NA or No, they should explain why their work has no societal impact or why the paper does not address societal impact.
        \item Examples of negative societal impacts include potential malicious or unintended uses (e.g., disinformation, generating fake profiles, surveillance), fairness considerations (e.g., deployment of technologies that could make decisions that unfairly impact specific groups), privacy considerations, and security considerations.
        \item The conference expects that many papers will be foundational research and not tied to particular applications, let alone deployments. However, if there is a direct path to any negative applications, the authors should point it out. For example, it is legitimate to point out that an improvement in the quality of generative models could be used to generate deepfakes for disinformation. On the other hand, it is not needed to point out that a generic algorithm for optimizing neural networks could enable people to train models that generate Deepfakes faster.
        \item The authors should consider possible harms that could arise when the technology is being used as intended and functioning correctly, harms that could arise when the technology is being used as intended but gives incorrect results, and harms following from (intentional or unintentional) misuse of the technology.
        \item If there are negative societal impacts, the authors could also discuss possible mitigation strategies (e.g., gated release of models, providing defenses in addition to attacks, mechanisms for monitoring misuse, mechanisms to monitor how a system learns from feedback over time, improving the efficiency and accessibility of ML).
    \end{itemize}
    
\item {\bf Safeguards}
    \item[] Question: Does the paper describe safeguards that have been put in place for responsible release of data or models that have a high risk for misuse (e.g., pretrained language models, image generators, or scraped datasets)?
    \item[] Answer: \answerNA{} 
    \item[] Justification: LLMs trained on toy and controlled tasks such as Sudoku puzzles or SAT problems do not pose any risk of misuse.
    \item[] Guidelines:
    \begin{itemize}
        \item The answer NA means that the paper poses no such risks.
        \item Released models that have a high risk for misuse or dual-use should be released with necessary safeguards to allow for controlled use of the model, for example by requiring that users adhere to usage guidelines or restrictions to access the model or implementing safety filters. 
        \item Datasets that have been scraped from the Internet could pose safety risks. The authors should describe how they avoided releasing unsafe images.
        \item We recognize that providing effective safeguards is challenging, and many papers do not require this, but we encourage authors to take this into account and make a best faith effort.
    \end{itemize}

\item {\bf Licenses for existing assets}
    \item[] Question: Are the creators or original owners of assets (e.g., code, data, models), used in the paper, properly credited and are the license and terms of use explicitly mentioned and properly respected?
    \item[] Answer: \answerYes{} 
    \item[] Justification: All previously existing models and datasets are properly cited in the main paper. Additionally, for the pre-existing code used to implement our Sudoku generator, we provide the license and terms of use in Section~\ref{app:sg}.

    \item[] Guidelines:
    \begin{itemize}
        \item The answer NA means that the paper does not use existing assets.
        \item The authors should cite the original paper that produced the code package or dataset.
        \item The authors should state which version of the asset is used and, if possible, include a URL.
        \item The name of the license (e.g., CC-BY 4.0) should be included for each asset.
        \item For scraped data from a particular source (e.g., website), the copyright and terms of service of that source should be provided.
        \item If assets are released, the license, copyright information, and terms of use in the package should be provided. For popular datasets, \url{paperswithcode.com/datasets} has curated licenses for some datasets. Their licensing guide can help determine the license of a dataset.
        \item For existing datasets that are re-packaged, both the original license and the license of the derived asset (if it has changed) should be provided.
        \item If this information is not available online, the authors are encouraged to reach out to the asset's creators.
    \end{itemize}

\item {\bf New assets}
    \item[] Question: Are new assets introduced in the paper well documented and is the documentation provided alongside the assets?
    \item[] Answer: \answerYes{} 
    \item[] Justification: Both the code associated with this paper and the Python libraries, described in detail in Sections~\ref{main:method}, and ~\ref{app:sg}, are both publicly available.
    \item[] Guidelines:
    \begin{itemize}
        \item The answer NA means that the paper does not release new assets.
        \item Researchers should communicate the details of the dataset/code/model as part of their submissions via structured templates. This includes details about training, license, limitations, etc. 
        \item The paper should discuss whether and how consent was obtained from people whose asset is used.
        \item At submission time, remember to anonymize your assets (if applicable). You can either create an anonymized URL or include an anonymized zip file.
    \end{itemize}

\item {\bf Crowdsourcing and research with human subjects}
    \item[] Question: For crowdsourcing experiments and research with human subjects, does the paper include the full text of instructions given to participants and screenshots, if applicable, as well as details about compensation (if any)? 
    \item[] Answer: \answerNA{} 
    \item[] Justification: This paper does not involve crowdsourced data or research involving human participants.
    \item[] Guidelines:
    \begin{itemize}
        \item The answer NA means that the paper does not involve crowdsourcing nor research with human subjects.
        \item Including this information in the supplemental material is fine, but if the main contribution of the paper involves human subjects, then as much detail as possible should be included in the main paper. 
        \item According to the NeurIPS Code of Ethics, workers involved in data collection, curation, or other labor should be paid at least the minimum wage in the country of the data collector. 
    \end{itemize}

\item {\bf Institutional review board (IRB) approvals or equivalent for research with human subjects}
    \item[] Question: Does the paper describe potential risks incurred by study participants, whether such risks were disclosed to the subjects, and whether Institutional Review Board (IRB) approvals (or an equivalent approval/review based on the requirements of your country or institution) were obtained?
    \item[] Answer: \answerNA{} 
    \item[] Justification: This paper does not involve crowdsourced data or research involving human participants.
    \item[] Guidelines:
    \begin{itemize}
        \item The answer NA means that the paper does not involve crowdsourcing nor research with human subjects.
        \item Depending on the country in which research is conducted, IRB approval (or equivalent) may be required for any human subjects research. If you obtained IRB approval, you should clearly state this in the paper. 
        \item We recognize that the procedures for this may vary significantly between institutions and locations, and we expect authors to adhere to the NeurIPS Code of Ethics and the guidelines for their institution. 
        \item For initial submissions, do not include any information that would break anonymity (if applicable), such as the institution conducting the review.
    \end{itemize}

\item {\bf Declaration of LLM usage}
    \item[] Question: Does the paper describe the usage of LLMs if it is an important, original, or non-standard component of the core methods in this research? Note that if the LLM is used only for writing, editing, or formatting purposes and does not impact the core methodology, scientific rigorousness, or originality of the research, declaration is not required.
    \item[] Answer: \answerYes{} 
    \item[] Justification: This research primarily aims to advance our understanding of LLMs in the context of combinatorial problems. To this end, we have developed a pipeline using the GPT-2 architecture, which is clearly explained and presented in the main paper. Our work also includes comparative experiments with previous research approaches, as well as evaluations against industrial models such as OpenAI, Gemini, and DeepSeek.
    \item[] Guidelines:
    \begin{itemize}
        \item The answer NA means that the core method development in this research does not involve LLMs as any important, original, or non-standard components.
        \item Please refer to our LLM policy (\url{https://neurips.cc/Conferences/2025/LLM}) for what should or should not be described.
    \end{itemize}
\end{enumerate}

\appendix

\section{Related work}\label{app:rw}
\paragraph{Sudoku.} 


Sudoku puzzles are played on an \( n^2 \times n^2 \) grid divided into \( n \times n \) blocks. The standard version has a rank of 3, resulting in a \( 9 \times 9 \) grid. Solving a Sudoku puzzle is an NP-complete problem~\citep{yato2003complexity}, meaning that while verifying a solution is computationally efficient, finding one is generally hard. Although there are more than \( 6.67 \times 10^{21} \) possible Sudoku grids, only approximately 5.47B are essentially distinct after accounting for symmetries. A valid Sudoku puzzle must have a unique solution. It has been proven through exhaustive computational analysis~\citep{mcguire2014there} that at least 17 clues are required to ensure a unique solution. Further insights into Sudoku, its mathematical properties, and solving techniques can be found in related works~\citep{felgenhauer2006mathematics, russell2006mathematics}.


Beyond the traditional text-based representation of Sudoku, as described in the main body, several studies have approached the problem from a visual perspective. These models, often referred to as neural-symbolic (NeSy) systems, perform constraint satisfaction by reasoning over their outputs to produce structured predictions. Typically, this involves some form of joint reasoning that integrates prior knowledge and constraints directly into the prediction process. SATNet~\citep{wang2019satnet} is a differentiable MAXSAT solver that employs semi-definite program relaxations to solve MAXSAT problems. It can be integrated as a layer within Deep NNs to tackle composite learning tasks that combine visual perception with logical reasoning, such as solving hand-written visual Sudoku puzzles. SATNet is notable for its ability to learn to solve visual Sudoku without relying on hand-crafted rules or symbolic supervision. However, subsequent work by~\citep{chang2020assessing} identified a critical flaw in the experimental setup: a label leakage issue that inadvertently exposed the model to information about the correct outputs during training. This issue was later addressed by~\citep{topan2021techniques}, who introduced a self-supervised clustering and distillation process for training a visual classifier within the SATNet architecture. Their approach uses self-supervised pre-training to solve visual Sudoku, effectively addressing the Symbol Grounding Problem observed in the original SATNet method. Recent research continues to explore visual representations of Sudoku and to develop new NeSy models. Examples include the work of~\citep{augustine2022visual, pryor2022neupsl, morra2023designing, van2023nesi, mulamba2024perception}. Recently, another line of work has explored the visual representations of Sudoku using Spatial Reasoning Models, a variation of diffusion/flow models, demonstrating their ability to solve Sudoku~\citep{wewer25srm, pogodzinski25spatialreasoners}.

\paragraph{$\msscL$ problem.}  
The $\msscL$ problem is a fundamental combinatorial optimization problem that models sequential decision-making. In this setting, there are $n$ boxes, each containing a hidden value drawn from a known distribution. Opening a box incurs a known cost, and the value revealed is either $0$ or $\infty$. A scenario is said to be \emph{covered} if at least one opened box reveals the value $0$~\citep{feige2002approximating}. The goal is to determine which boxes to open and which value to select, in order to minimize the total cost. This total cost includes both the costs of opening boxes and the value selected. The $\msscL$ problem can be seen as a simplified version of the $\pbpL$ problem~\citep{weitzman1979optimal}. There is a substantial body of literature focused on the $\msscL$ problem, aiming to improve approximation guarantees and to investigate various generalizations. These generalizations encompass a wide range of settings, including but not limited to, selecting boxes without prior inspection, correlations among boxes, requirements to inspect boxes in a specific order, and situations involving more complex combinatorial constraints~\citep{munagala2005pipelined, azar2009multiple, bansal2010constant, azar2011ranking, skutella2011note, im2014preemptive, im2016minimum, fotakis2021approximability, bansal2023min}. Similarly, the $\pbpL$ problem itself is also well studied in the literature, with many works exploring its generalized settings to account for more complex information structures and decision-making frameworks~\citep{doval2018whether, fu2018ptas, singla2018price, gupta2019markovian, beyhaghi2019pandora, boodaghians2020pandora, chawla2020pandora, segev2021efficient, bechtel2022delegated, chawla2021approximating, beyhaghi2023pandora, fu2023pandora, gergatsouli2023weitzman, berger2023pandora, ma2023buying, ding2023competitive}. In addition, both $\msscL$ and $\pbpL$ problems have been studied from a machine learning perspective, as it naturally combines algorithmic and learning aspects. Work in this direction has considered frameworks involving sample complexity and online learning models~\citep{esfandiari2019online, gergatsouli2022online, bhaskara2020adaptive, fotakis2020online, cesa2021new, guo2021generalizing, bienkowski2023improved, basiak2023improved, atsidakou2024contextual, gatmiry2024bandit}.

\section{Solving Sudoku with frontier models}\label{app:front}
We evaluated the performance of several industrial-scale models from the OpenAI GPT~\citep{achiam2023gpt}, Google Gemini~\citep{team2023gemini}, and DeepSeek~\citep{guo2025deepseek} families, considering both general-purpose and reasoning-oriented versions. None of these models were able to produce correct solutions. The general-purpose models generate responses quickly but inaccurately, which made it feasible to test them on 1,000 Sudoku puzzles. In contrast, reasoning models require significantly more time for inference; therefore, we limited their evaluation to 10 representative samples. Additionally, we tested the models GPT-o1, GPT-o3, GPT-o4 mini, and Gemini-2.5 Pro. In every attempt, these models spent the entire inference process producing intermediate steps, without ever generating a final solution. For this reason, we have not included them in the result tables. 

To further explore the impact of input representation on model performance, we tested three distinct formats for encoding Sudoku puzzles. Examples of each format are provided below.
\begin{itemize}
    \item \textbf{Single-line format:} A single string of 81 digits (with \texttt{0} indicating empty cells), such as:
    
    \texttt{000100000000030960007000402300006000000040000806000120603050009290000\\380500800000}
    
    \item \textbf{Grid format (9$\times$9):} A standard 9-line grid with 9 digits per line, such as:

    \noindent\hspace*{\fill}\begin{minipage}{3cm}
    \begin{verbatim}
000100000
000030960
007000402
300006000
000040000
806000120
603050009
290000380
500800000
    \end{verbatim}
    \end{minipage}\hspace*{\fill}
    
    \item \textbf{Triplet format (row, column, value):} A sequence of 3-digit numbers where the first digit is the row index, the second is the column index, and the third is the cell value, such as:
    
    \texttt{110 120 130 141 150 160 170 180 190 210 220 230 240 253 260 279 286 290 310 320 337 340 350 360 374 380 392 413 420 430 440 450 466 470 480 490 510 520 530 540 554 560 570 580 590 618 620 636 640 650 660 671 682 690 716 720 733 740 755 760 770 780 799 812 829 830 840 850 860 873 888 890 915 920 930 948 950 960 970 980 990}
\end{itemize}

As shown in Tables~\ref{tab:frontmodels1}--\ref{tab:frontmodels31}, the evaluated models consistently fail to solve Sudoku puzzles across three different encoding formats. This failure occurs even when prompts explicitly suggest Sudoku solving strategies the models could follow. When prompts do not guide them to apply logical rules, the models tend to rely on external code or a backtracking algorithm. Backtracking relies on brute-force search rather than reasoning, making it less generalizable and frequently inefficient in challenging cases. Moreover, it can produce suboptimal or incorrect solutions, especially under time or resource constraints. As shown in the tables below, although Sudoku is a complex reasoning task, general-purpose models typically achieve, in most of the cases, slightly higher cell accuracy than reasoning-specific models, and they do so with significantly faster inference.

Reasoning models, tend to follow Sudoku-solving strategies more systematically than general-purpose models. However, they exhibit two key weaknesses. Firstly, their inference times are significantly higher. As shown in Table~\ref{tab:frontmodels12}, the best-time performing general-purpose model completed a puzzle in 1.6 seconds on average, whereas the best reasoning model required 214.9 seconds in the single-line encoding format, an increase of 134.3\%. Secondly, while reasoning models apply strategies in a more step-by-step fashion, they lack the flexibility to revise incorrect inferences. Once they make a mistake, they often continue along an invalid solution path, preventing them from arriving at a correct final answer. In some cases, they spend the entire inference process generating intermediate reasoning steps without ever completing the puzzle.

\subsection{Single-Line format}
The prompt used in this case was as follows:
\begin{tcolorbox}[colback=gray!5!white, colframe=gray!75!black, title=Prompt]
\ttfamily
\justifying
\spaceskip=6pt plus 2pt minus 1pt
Solve the following Sudoku puzzle using logical reasoning only. Do not use or simulate any external code, algorithms, or automated solvers (e.g., backtracking). Instead, apply human-style solving techniques such as lone singles, hidden singles, naked pairs, naked triples, locked candidates, pointing pairs, X-Wing, Swordfish, XY-Wing, unique rectangles, and other logical strategies typically used by experienced Sudoku solvers. The puzzle is represented as a single line of 81 digits (0 means empty cell): \{puzzle\_string\} Return only the final solution as a single line of exactly 81 digits, with no extra explanation or formatting.
\end{tcolorbox}

\begin{table}[h]
    \centering
    \caption{Performance of two state-of-the-art LLMs on solving 1,000 Sudoku puzzles (in single-line format), evaluated by board-level accuracy and average inference time per puzzle.}
    \renewcommand{\arraystretch}{1.2} 
    \begin{tabular}{lccc}
        \textbf{Model} & \textbf{Board Accuracy} & \textbf{Cell Accuracy} & \textbf{Time (sec./puzzle)} \\
        \hline
        GPT--4o         & 0.00\% & 46.39\% & 1.60 \\
        Gemini--1.5 Pro & 0.00\% & 34.9\% & 3.20 \\
        \hline
    \end{tabular}
    \label{tab:frontmodels1}
\end{table}

\begin{table}[h]
    \centering
    \caption{Performance of three reasoning state-of-the-art LLMs on solving 10 Sudoku puzzles (in single-line format), evaluated by board-level accuracy and average inference time per puzzle.}
    \renewcommand{\arraystretch}{1.2} 
    \begin{tabular}{lccc}
        \textbf{Model} & \textbf{Board Accuracy} & \textbf{Cell Accuracy} & \textbf{Time (sec./puzzle)} \\
        \hline
        GPT--o3 mini & 0.00\% & 29.38\% & 214.90 \\
        Gemini--2.5 Flash & 0.00\% & 22.22\% & 253.80\\
        DeepSeek--R1         & 0.00\% & 39.26\% & 588.20 \\
        \hline
    \end{tabular}
    \label{tab:frontmodels12}
\end{table}

\newpage
\subsection{9$\times$9 format}
The prompt used in this case was as follows:
\begin{tcolorbox}[colback=gray!5!white, colframe=gray!75!black, title=Prompt]
\ttfamily
\justifying
\spaceskip=6pt plus 2pt minus 1pt
Solve the following Sudoku puzzle using logical reasoning only. Do not use or simulate any external code, algorithms, or automated solvers (e.g., backtracking). Instead, apply human-style solving techniques such as lone singles, hidden singles, naked pairs, naked triples, locked candidates, pointing pairs, X-Wing, Swordfish, XY-Wing, unique rectangles, and other logical strategies typically used by experienced Sudoku solvers. The puzzle is represented as 9 lines of 9 digits each (0 means an empty cell): \{puzzle\_string\} Return only the final solution as a single line of exactly 81 digits, with no extra explanation or formatting.
\end{tcolorbox}

\begin{table}[h]
    \centering
    \caption{Performance of two state-of-the-art LLMs on solving 1,000 Sudoku puzzles (in 9$\times$9 format), evaluated by board-level accuracy and average inference time per puzzle.}
    \renewcommand{\arraystretch}{1.2} 
    \begin{tabular}{lccc}
        \textbf{Model} & \textbf{Board Accuracy} & \textbf{Cell Accuracy} & \textbf{Time (sec./puzzle)} \\
        \hline
        GPT--4o         & 0.00\% & 44.62\% & 1.60 \\
        Gemini--1.5 Pro & 0.00\% & 31.70\% & 3.20 \\
        \hline
    \end{tabular}
    \label{tab:frontmodels2}
\end{table}

\begin{table}[h]
    \centering
    \caption{Performance of three reasoning state-of-the-art LLMs on solving 10 Sudoku puzzles (in 9$\times$9 format), evaluated by board-level accuracy and average inference time per puzzle.}
    \renewcommand{\arraystretch}{1.2} 
    \begin{tabular}{lccc}
        \textbf{Model} & \textbf{Board Accuracy} & \textbf{Cell Accuracy} & \textbf{Time (sec./puzzle)} \\
        \hline
        GPT--o3 mini    & 0.00\% & 30.99\% & 265.70 \\
        Gemini--2.5 Flash & 0.00\% & 20.00\% & 250.80 \\
        DeepSeek--R1    & 0.00\% & 41.48\% & 510.40 \\
        \hline
    \end{tabular}
    \label{tab:frontmodels21}
\end{table}

\newpage
\subsection{(Row, Column, Value) format}
The prompt used in this case was as follows:
\begin{tcolorbox}[colback=gray!5!white, colframe=gray!75!black, title=Prompt]
\ttfamily
\justifying
\spaceskip=6pt plus 2pt minus 1pt
Solve the following Sudoku puzzle using logical reasoning only. Do not use or simulate any external code, algorithms, or automated solvers (e.g., backtracking). Instead, apply human-style solving techniques such as lone singles, hidden singles, naked pairs, naked triples, locked candidates, pointing pairs, X-Wing, Swordfish, XY-Wing, unique rectangles, and other logical strategies typically used by experienced Sudoku solvers. The puzzle is given as a list of 3-digit numbers separated by spaces. Each number represents a cell: the first digit is the row (1–9), the second is the column (1–9), and the third is the value (0–9, with 0 meaning empty): \{puzzle\_string\} Return only the final solution as a list of 3-digit numbers separated by spaces, each indicating row-column-value (e.g., 123 for row 1, column 2, value 3), with no extra explanation or formatting.
\end{tcolorbox}

\begin{table}[h]
    \centering
    \caption{Performance of two state-of-the-art LLMs on solving 1,000 Sudoku puzzles (in (Row, Column, Value) format), evaluated by board-level accuracy and average inference time per puzzle.}
    \label{tab:frontmodels3}
    \renewcommand{\arraystretch}{1.2} 
    \begin{tabular}{lccc}
        \textbf{Model} & \textbf{Board Accuracy} & \textbf{Cell Accuracy} & \textbf{Time (sec./puzzle)} \\
        \hline
        GPT--4o         & 0.00\% & 27.54\% & 6.00 \\
        Gemini--1.5 Pro & 0.00\% & 44.45\% & 3.80 \\
        \hline
    \end{tabular}
\end{table}

\begin{table}[h]
    \centering
    \caption{Performance of three reasoning state-of-the-art LLMs on solving 10 Sudoku puzzles (in (Row, Column, Value) format), evaluated by board-level accuracy and average inference time per puzzle.}
    \renewcommand{\arraystretch}{1.2} 
    \begin{tabular}{lccc}
        \textbf{Model} & \textbf{Board Accuracy} & \textbf{Cell Accuracy} & \textbf{Time (sec./puzzle)} \\
        \hline
        GPT--o3 mini & 0.00\% & 29.51\% & 289.80 \\
        Gemini--2.5 Flash & 0.00\% & 25.19\% & 245.40\\
        DeepSeek--R1         & 0.00\% & 31.85\% & 672.30 \\
        \hline
    \end{tabular}
    \label{tab:frontmodels31}
\end{table}

\section{Additional experimental details (Table~\ref{tab:comparison_results})}\label{app:dfs}
Regarding the comparative results in Table~\ref{tab:comparison_results}, the RRN~\citep{palm2018recurrent} and Recurrent Transformer~\citep{yang2023learning} models were originally trained on the RRN dataset. We report their performance on the RRN test set based on the results published in their respective papers. To evaluate these models on additional test sets, we reproduced them using their publicly available code. We also retrained both models on our Random dataset by replacing the original training data with our own.

The Causal Transformer from~\citep{shahcausal} was trained on the filtered Kaggle dataset; we report its performance on both the Kaggle filtered and RRN test sets as published in their paper. The same holds for~\citep{kimtrain}, where we report their results as published in their work. For the unfiltered Kaggle test set,~\citep{shahcausal} report the results of the Causal Transformer~\citep{shahcausal} in their paper, and we additionally computed the board accuracy on the full 3M dataset.

Regarding our approach, our Transformer-based model builds on Andrej Karpathy’s open-source minGPT implementation (MIT License, \href{https://github.com/karpathy/minGPT}{GitHub Code}). All experiments were conducted using PyTorch~\citep{paszkepytorch} on a single NVIDIA A10G GPU.

We trained our model for 3M steps (about 168 GPU hours) with a batch size of 32 using the AdamW optimizer~\citep{adam}. A linear learning rate schedule was applied: the rate was linearly increased to \(1 \times 10^{-4}\) during the first 5 steps (warmup), then linearly decayed to zero over the remaining training steps.

\section{Additional details on optimizing guesses}\label{app:opt_guess}

\begin{figure}[t!]
    \centering    \includegraphics[width=.95\textwidth]{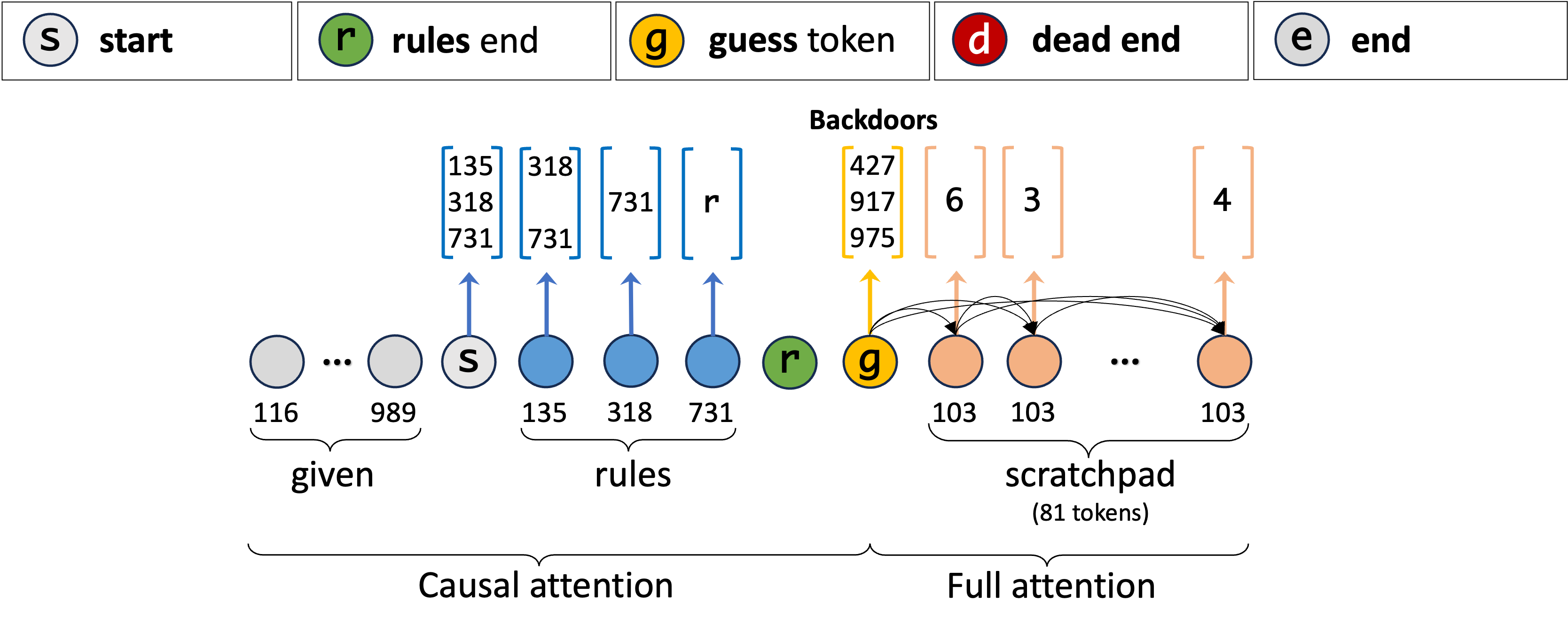}
    \caption{Expanded view of the guess node from Figure~\ref{fig:restart_flow}. The guess node is implemented as a guess token followed by a sequence of 81 thinking tokens (shown in orange), which form the scratchpad. Arrows represent full self-attention within the scratchpad and from the guess token to the thinking tokens. As in previous figures, all tokens attend causally to preceding tokens, but causal attention arrows are omitted for clarity.
    }
    \label{fig:scratchpad}
\end{figure}

Figure~\ref{fig:scratchpad} illustrates our implementation of the guess mechanism. We design the guess node of Figure~\ref{fig:restart_flow} as a guess token followed by a fixed sequence of 81 \emph{thinking tokens}, which together form what we call the \emph{scratchpad}. These thinking tokens correspond one-to-one with the 81 cells of a Sudoku board and are assigned fixed positional embedding indices from 1 to 81. Each thinking token is trained to predict the correct value of its corresponding cell (similarly to~\citep{yang2023learning}). 

While the rest of the sequence follows standard causal attention, the scratchpad introduces a local window of full self-attention: the guess token attends to all 81 thinking tokens, and they attend to one another. Each scratchpad token also attends causally to all preceding tokens in the sequence. This structure is useful for the guessing step, as it allows the model to consider a full board solution and base its next guess on multiple coordinated cell predictions.

During evaluation, whenever the model outputs the guess token, we augment the input sequence by appending the scratchpad tokens, enabling a single forward pass with full attention between the guess token and the scratchpad. From this forward pass, we extract the model's output at the guess token and use it as the next generated token.

After this step, the scratchpad is removed from the input, and the generation proceeds normally in an autoregressive manner by appending the newly generated token to the existing sequence. 

This approach maintains a next-token generation pipeline, while introducing a task-specific reasoning step at the guess token to leverage the Transformer’s prediction power.

\section{Connection to the $\msscL$ problem}\label{app:theorymssc}
We now describe the $\msscL$ problem, in the terminology of our work. This models a depth-1 search problem where the goal is to find which guess would lead to a valid solution.

In this problem, there are $n$ potential choices one can take but only a subset $S$ of them are \emph{valid}. 
While the exact subset $S$ is unknown, it is known that the subset is drawn from a given distribution $\mathcal{D}$.
In the simplest variant, it costs 1 time unit to explore any choice and once a choice is made, one directly observes whether that choice was valid or not but receives no additional information. The goal in this problem is to find a policy $\pi$ that explores different options and minimizes the expected time until a valid solution is obtained. More formally, defining for every policy $\pi$ and every subset $S$,  $T(\pi | S)$ 
the expected time until a valid choice $i \in S$ is made under policy $\pi$, we seek to find the best policy that minimizes:
$$\min_{\pi} \E{S \sim \mathcal{D}} {T(\pi | S)} $$

In the absence of additional feedback, the best policy takes the form of a permutation of all choices $1$ to $n$. For such a permutation $\pi$, 
$$T(\pi | S) = \arg\min_{i=1}^n \pi_i \in S$$

Computing the optimal policy given a distribution over sets $\mathcal{D}$ is significantly challenging even if the distribution $\mathcal{D}$ is explicitly given. In fact, as~\citep{feige2002approximating} show it is NP-hard to approximate the optimal policy better than a factor of $4$. Obtaining 4-approximate policy can be done in a number of different ways, e.g. via a greedy algorithm~\citep{feige2002approximating} or solving a linear programming relaxation and performing randomized rounding~\citep{gergatsouli2022online}.

\begin{theorem}[From~\citep{feige2002approximating}]
It is NP-hard to compute a policy that approximates the optimal by a factor better than 4. 
The greedy policy that always selects the most likely choice conditional on the set not being covered by the choices explored so far is 4-approximate.
\end{theorem}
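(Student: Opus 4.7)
The plan is to establish the two directions separately. For the upper bound on the greedy policy, I would use a histogram charging argument. For each realized valid set $S$, let $g(S)$ denote the step at which greedy first picks an element of $S$; the expected greedy cost equals $\E{S \sim \mathcal{D}}{g(S)}$, and analogously define $o(S)$ for an optimal fixed permutation. Sorting scenarios in decreasing order of $g(S)$ produces a histogram whose area is exactly the greedy cost, and similarly for OPT. The key combinatorial lemma is that because greedy at every step picks the choice covering the maximum probability mass of still-uncovered scenarios, the mass of uncovered scenarios cannot decay too much faster under OPT than under greedy. The cleanest way to quantify this is a dyadic comparison: show that the fraction of scenarios covered by OPT within $t$ steps is covered by greedy within $2t$ steps, which follows because OPT's own set at step $t+1$ is a valid candidate that greedy could have taken. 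Integrating (or summing over dyadic intervals) that inequality against the histogram yields $\mathrm{GREEDY} \leq 4 \cdot \mathrm{OPT}$.

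For the hardness direction, I would reduce from a PCP-hard problem, such as a gap version of Label Cover or $3$-SAT. The construction encodes variable assignments as choices and clauses (or constraint pairs) as scenarios, arranged so that a satisfying assignment gives an ordering in which almost all scenarios are covered early, producing cost $\approx C$, whereas in unsatisfiable instances any ordering necessarily delays a constant fraction of scenarios to later steps, producing cost $\approx (4-\varepsilon) C$. The factor $4$ emerges from a balanced averaging argument: in the ``no''-case, for any candidate permutation a constant fraction of scenarios cannot be covered before roughly half the choices have been used, which contributes roughly $2 \cdot \tfrac{1}{2}$ as much to the cost as in the ``yes''-case, and another factor $2$ comes from the spread of coverage times. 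Standard PCP amplification then boosts the $\varepsilon$-gap to a full inapproximability statement.

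The main obstacle will be pinning down the tight constant $4$ in both directions. Naive greedy analyses (e.g., through fractional LP duality or simple telescoping) yield constants strictly worse than $4$, so the dyadic histogram comparison must be carried out without any slack: in particular, the doubling argument must align exactly with the step-counting, and the integration of the ``half in $t$ steps implies all in $2t$ steps'' inequality must not lose constants. On the hardness side, the challenge is constructing a reduction where the gap is preserved under PCP amplification and matches $4$ rather than an arbitrary constant; this typically requires a carefully tuned partition-system gadget (as in Feige's original construction) rather than an off-the-shelf Set Cover reduction. A secondary subtlety in the algorithmic direction is handling ties and the distributional setting: the greedy rule must be interpreted as maximizing conditional probability of hitting $S$, and the histogram argument must be lifted from counting-over-elements to expectation-over-$\mathcal{D}$, which is notationally delicate but structurally identical.
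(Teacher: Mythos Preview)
The paper does not prove this theorem at all. It is stated with the attribution ``[From~\citep{feige2002approximating}]'' and no argument is given; the authors simply quote the result to motivate their later analysis of non-adaptive policies. So there is no in-paper proof to compare your proposal against.

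That said, your sketch broadly follows the Feige--Lov\'asz--Tetali strategy (histogram comparison for the upper bound, PCP-based reduction for the lower bound), but the key lemma you state for the upper bound is off. You write that ``the fraction of scenarios covered by OPT within $t$ steps is covered by greedy within $2t$ steps.'' If that inequality held, summing over $t$ would give $\mathrm{GREEDY}\le 2\cdot\mathrm{OPT}$, not $4$; and since greedy's $4$-approximation is tight, the stated lemma is actually false. The correct Feige--Lov\'asz--Tetali argument shrinks the greedy histogram by a factor of two in \emph{both} axes (time and number of uncovered sets) and shows the shrunk histogram fits under OPT's; the two factors of $2$ multiply to give $4$. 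Your justification (``OPT's own set at step $t+1$ is a valid candidate that greedy could have taken'') is the germ of the right idea, but it only controls the marginal gain per step, not the cumulative coverage, and you need the additional halving on the vertical axis to make the comparison go through. On the hardness side your outline is in the right direction but essentially defers all the work to ``Feige's original construction,'' so it is more of a citation than a plan.
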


We note however, that these policies require heavy optimization and estimating the conditional distribution after several number of choices have already been explored. In contrast, naive methods that directly order choices by the marginal probabilities that the guess is valid are highly suboptimal and may end-up costing $\Theta(n)$-times more than the optimal policy. 

\begin{example}\label{ex:marginals}
Suppose that $\mathcal{D}$ is simply the set $\{1,...,n-1\}$ with probability $2/3$ and the set $\{ n \}$ with probability $1/3$. 

In such a case, a method based only on the marginals would explore all choices $1$ through $n-1$ before exploring the choice $n$ taking time at least $n/3$ in expectation. In contrast, the optimal policy is to first explore option $1$ and then option $n$ taking time $1 \cdot \frac 2 3 + 2 \cdot \frac 1 3 = \frac 4 3$ in expectation.
\end{example}

Optimizing over $\msscL$ policies is non-trivial and requires customized objectives. To simplify the optimization, we propose restricting focus to a simpler class of policies that are non-adaptive and do not depend on the choices made so far. Such policies are inherently randomized, and we treat $\pi$ as a probability distribution over the choices. Every time the policy $\pi$ selects a choice $i$ with probability $\pi_i$ irrespective of what has been tried so far.

\begin{definition}
A policy is called \emph{non-adaptive} if the choice that it explores at every round is independent of the current time $t$ and choices explored up to time $t$.
\end{definition}

Evaluating the expected time until a valid choice is identified takes a simple form. At any time step, the probability that a valid choice is taken is $\pi(S) = \sum_{i \in S} \pi_i$. Therefore the expected time $T(\pi|S) = \frac 1 {\sum_{i \in S} \pi_i}$.

We thus seek to minimize
\begin{equation}\label{eq:loss-function}
\E{S \sim \mathcal{D}} {\frac 1 {\sum_{i \in S} \pi_i}}
\end{equation}

This gives a natural loss function capturing the time to make a valid choice which is a convex optimization problem over the convex space of probability distributions over $n$ elements $\Delta(n) \triangleq \{ \pi \in [0,1]^n: \sum_{i=1}^n \pi_i = 1\}$ and can thus be solved efficiently with stochastic gradient descent (SGD). To see this, note that the function $1/x$ is convex when $x>0$, and thus $\frac 1 {\sum_{i \in S} \pi_i}$ is convex as a composition of a convex function with a linear function and the overall loss is convex as an expectation of convex functions.

Restricting over non-adaptive policies may come at a cost as some choices may be explored multiple times increasing the time until a solution is found. However, the following theorem that we establish guarantees that the total time of the optimal non-adaptive policy is at most $O(\log n)$ times that of the optimal permutation.

\begin{theorem}
For any distribution $\mathcal{D}$ over sets $S \subseteq [n]$, it holds that for any permutation $\tau$:
$$\min_{\pi \in \Delta(n)} \E{S \sim \mathcal{D}} {\frac 1 {\sum_{i \in S} \pi_i}} \le H_n \cdot \E{S \sim \mathcal{D}}{ \arg\min_{i=1}^n \{\tau_i \in S \} }$$
where $H_n = 1+\frac 1 2 + \frac 1 3 + \dots + \frac 1 n = \Theta(\log n)$ is the $n$-th harmonic number.
\end{theorem}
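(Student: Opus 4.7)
The plan is to exhibit a single explicit non-adaptive distribution $\pi$, built directly from the permutation $\tau$, and to show that its expected cost is at most $H_n$ times the cost of $\tau$. Since the left-hand side of the target inequality is a minimum over $\pi \in \Delta(n)$, producing any one $\pi$ that meets this bound already suffices.

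First I would define the weights $\pi_{\tau_k} = \frac{1}{k\, H_n}$ for $k = 1,\dots,n$, where $\tau_k$ denotes the element placed in position $k$ by $\tau$. This is a valid probability distribution since $\sum_{k=1}^n \tfrac{1}{k H_n} = H_n / H_n = 1$, and it concentrates more mass on elements appearing earlier in the permutation. Identifying these harmonic weights is the real crux of the argument.

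Next I would perform a pointwise comparison. Fix any $S$ in the support of $\mathcal{D}$ and let $k^\star(S) = \min\{k : \tau_k \in S\}$ be the cost incurred by $\tau$ on $S$. Because the first hit element $\tau_{k^\star}$ lies in $S$,
$$\sum_{i \in S} \pi_i \;\ge\; \pi_{\tau_{k^\star}} \;=\; \frac{1}{k^\star(S)\, H_n},$$
so that $1 / \sum_{i \in S} \pi_i \le H_n \cdot k^\star(S)$. Taking expectation over $S \sim \mathcal{D}$ on both sides and using that the minimum over $\pi \in \Delta(n)$ can only be smaller than the value achieved by my specific choice of $\pi$ immediately yields the stated inequality.

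The hard part is essentially just guessing the correct weights: the $1/k$ decay is the borderline schedule that keeps $\pi_{\tau_k} \ge 1/(k H_n)$ simultaneously for every $k$ while respecting $\sum_i \pi_i = 1$; any faster decay would violate the lower bound for some $k$, and any slower decay would not normalize. Once this construction is on the table, the rest of the proof is a one-line pointwise estimate followed by linearity of expectation, with no technical difficulty. A small edge case to mention: if $S = \emptyset$ has positive mass under $\mathcal{D}$, both sides of the inequality are $+\infty$ and the statement holds vacuously.
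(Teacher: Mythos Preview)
Your proof is correct and takes a genuinely different route from the paper's. The paper constructs a \emph{distribution-dependent} policy $\pi_i \propto \sqrt{q_i}$, where $q_i$ is the probability under $\mathcal{D}$ that $S$ is first covered at position $i$ of $\tau$; it bounds the expected cost by $\bigl(\sum_i \sqrt{q_i}\bigr)^2$ and then applies Cauchy--Schwarz with $a_i = \sqrt{i\,q_i}$ and $b_i = 1/\sqrt{i}$ to extract the $H_n$ factor. Your harmonic weights $\pi_{\tau_k} = 1/(k H_n)$ depend only on $\tau$, not on $\mathcal{D}$, and give a \emph{pointwise} inequality $1/\sum_{i \in S} \pi_i \le H_n \cdot k^\star(S)$ valid for every realization of $S$, which is strictly stronger than the expectation bound. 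Your argument is more elementary---no Cauchy--Schwarz---and delivers a single universal policy that works simultaneously against every $\mathcal{D}$. The paper's choice is the minimizer of the intermediate upper bound $\sum_i q_i/\pi_i$, so it could in principle yield a sharper constant for specific $\mathcal{D}$, but that extra precision is not used in proving the $H_n$ factor.
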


\begin{proof}
Fix a permutation $\tau$ and let $q_i$ be the probability that a random $S \sim \mathcal{D}$ is covered in exactly $i$ steps. The cost of permutation $\tau$ i.e. $\E{S \sim \mathcal{D}}{ \arg\min_{i=1}^n \{\tau_i \in S \} }$ is given by $\sum_{i=1}^n q_i \cdot i$. 

Without loss of generality assume that $\tau_i = i$. Now consider a non-adaptive policy $\pi$ that sets the probability $\pi_i \propto \sqrt{q_i}$ i.e. $\pi_i = \frac {\sqrt{q_i}} {\sum_{j=1}^n \sqrt{q_j}}$.

We now have that 
$$\E{S \sim \mathcal{D}} {\frac 1 {\sum_{i \in S} \pi_i}} \le \sum_{i=1}^n q_i \frac 1 {\pi_i} = \left(\sum_{i=1}^n \sqrt{q_i} \right)^2$$
where the inequality follows from the fact that the $q_i$-fraction of sets $S$ that are covered in $i$ steps according to $\tau$ must contain the element $i$.

To complete the proof we need to show that $\left(\sum_{i=1}^n \sqrt{q_i} \right)^2 \le (\sum_{i=1}^n \frac 1 i) (\sum_{i=1}^n i q_i)$ which follows from the Cauchy-Schwartz inequality on the sequences $a_i = \sqrt{i q_i}$ and $b_i = \frac 1 {\sqrt{i}}$.
\end{proof}

\begin{remark}
Using the loss function of Equation~\eqref{eq:loss-function}, results in solutions with bounded approximation to the optimal policy. In contrast, if one treats the problem as a multi-class classification task and attempts to use a loss function such as a weighted Cross-Entropy Loss $$\E{S \sim \mathcal{D}} {\alpha_S \sum_{i \in S} \log \frac 1 {\pi_i}} $$
the approximation would be much worse. This is because the Cross-Entropy loss can be rewritten entirely as a function depending only on the marginals $ \sum_i \beta_i \log \frac 1 {\pi_i}$ for some given weights $\beta_i$ and as we established in Example~\ref{ex:marginals}, this leads to paying a linear multiplier in the total expected time.

\end{remark}

\subsection{From Sudoku to contextual $\msscL$ problem}
Mapping: the choices are the moves to be suggested as backdoors, and the goal is to efficiently figure out which backdoor given as a hint would result in a correct solution.

While the above analysis considers a fixed distribution over sets, in our setting we wish to exploit information about the instance to quickly pin-down the correct choice. While in our setting there is no stochasticity as all information about which choices would lead to solutions is directly encoded in the initial state, such a mapping may be very incompressible and may not be representable even by large NNs. As such stochasticity arises due to this loss of information. We can thus view the problem as a generalization of $\msscL$ where the current state is given as a context $\vec x$ and the goal is to learn a mapping from context to a search policy (distribution over choices).
In this generalization, we are given a distribution $\mathcal{D}$ over contexts $\vec x \in \mathbb{R}^d$ and sets $S \subseteq [n]$, a class $\Pi$ of contextual policies $\pi: \mathbb{R}^d \rightarrow \Delta(n)$ and we seek to minimize:

$$\min_{\pi \in \Pi} \E{(\vec x, S) \sim \mathcal{D}} {T(\pi(\vec x) | S)} $$

If the class $\Pi$ contained all linear mappings i.e. $\pi = A x$, the loss function presented in Equation~\eqref{eq:loss-function} would remain convex as a function of the entries of $A$, and one can again optimize it via SGD. In general, we apply our method over mappings given by complex NNs and Transformers so optimality is not-guaranteed but hopefully we can converge to good local minima.

What is guaranteed however, is that as long as every choice gets assigned non-trivial probability of being selected we will eventually arrive at a solution.

\subsection{Multiple-levels of guessing}
Our discussion so far focused on single level of guessing which corresponded to $\msscL$. One can define a generalization of the problem on arbitrary trees. We can still consider randomized non-adaptive policies as performing random walks in the tree until a node that corresponds to a valid solution is reached. Optimizing over non-adaptive policies would boil down to estimating the transition probabilities of a (absorbing) Markov Chain so that the time to solution (time to absorption) is minimized.

While this is a very interesting extension to explore both theoretically and experimentally, this comes with additional challenges that were out of scope of the current work. We plan to do explore them however in future work.

\section{Libraries of data generators}\label{app:sg}
\subsection{SudokuPy: a fast Sudoku puzzle generator}
In this section, we introduce \texttt{SudokuPy}; a fast Python library, primarily implemented in C, that generates uniformly random Sudoku puzzles in a principled way from the full distribution of valid Sudoku grids. For combinatorial problems like Sudoku, it is essential to generate diverse, truly random instances rather than rely on static datasets, allowing stream-like training and avoiding overfitting issues as every puzzle is only seen once. \texttt{SudokuPy} addresses this need by enabling on-the-fly generation of puzzles along with their corresponding solutions, making it particularly suitable for large-scale data generation and advanced AI training for standardized benchmarks in research. The library ensures principled and efficient puzzle generation by sampling from the full distribution of valid grids using precomputed permutation tables and optimized indexing. It includes a high-performance C-based solver to validate and generate puzzles, significantly improving computational speed. 
\texttt{SudokuPy} can be installed as the Python package \texttt{sudokupy}. It is distributed under the MIT License, ensuring free accessibility, and the distribution can also be reproduced following the description below.

The core functionality of our generator is described and illustrated in Figure~\ref{fig:generator}. In this figure, the input to our library function is shown in blue: the number P, which specifies how many Sudoku puzzles to generate. The output is highlighted in green and labeled S; it represents the generated Sudoku puzzles. 

The process starts by generating a uniformly random number between 1 and {6,670,903,752,021,072,936,960}, which corresponds to the total number of valid Sudoku puzzles. This number is then mapped to a fully solved Sudoku board using precomputed permutation tables and the functions \texttt{board\_encode} and \texttt{board\_decode}, which together form a one-to-one mapping. In the next stage, the generator gradually removes numbers from the completed board using the \texttt{puzzle\_generator} function. After each removal, the generator verifies that the puzzle still has a unique solution by calling the \texttt{puzzle\_solver} function. While \texttt{puzzle\_generator} handles the random deletion of entries, \texttt{puzzle\_solver} ensures that the puzzle remains valid.  The \texttt{puzzle\_solver} is C-based function integrated with our package, ensuring high performance even on systems lacking a native C compiler. For user convenience, we provide the black-box function named as \texttt{sudokupy\_generator}, which takes an integer as input and returns that many random Sudoku puzzles along with their corresponding solutions.


\begin{figure}[h!]
    \centering
    \includegraphics[width=1.\textwidth]{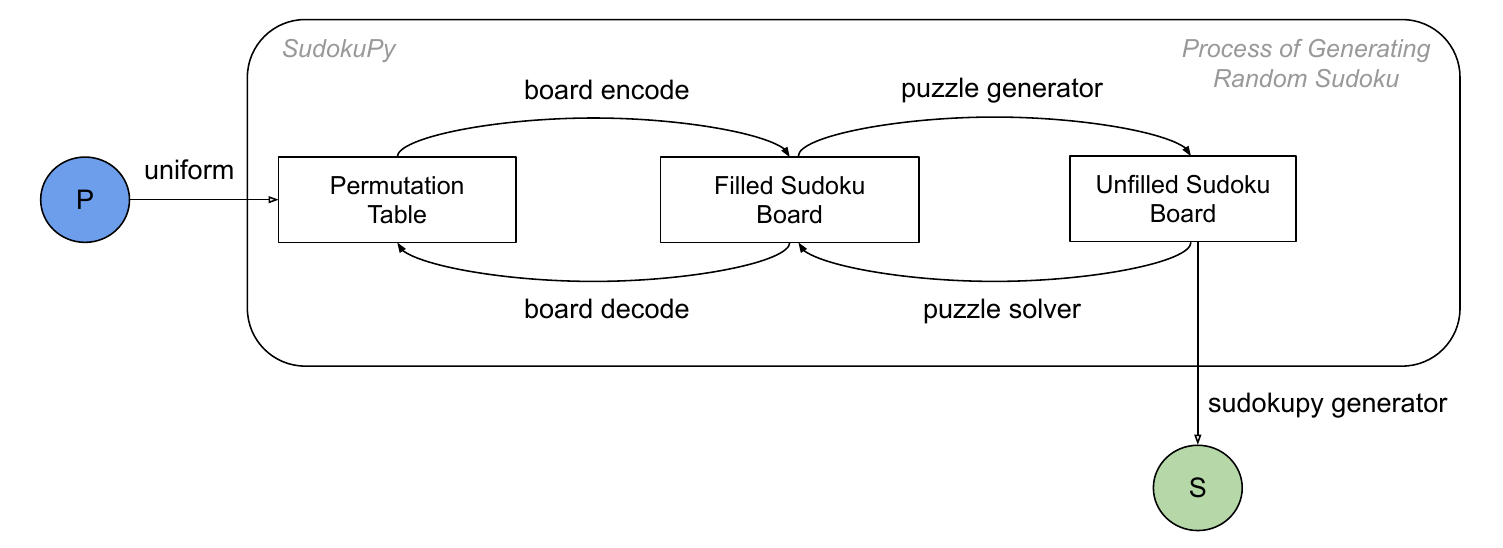}
    \caption{Main process and features of the uniform Sudoku puzzle generator, SudokuPy}
    \label{fig:generator}
\end{figure}

\newpage
\subsection{Transcripts library}\label{app:librarysat}
The Transcripts library is primarily implemented in C, and it can be installed and used as a Python package. It provides transcripts for a wide range of problems, including Sudoku and 1-in-3 SAT problem which are used in this work. It automatically supports any combinatorial problem as long as the user can implement the following functions.

\begin{lstlisting}[style=customc]
// Generate a (random) problem instance
Problem generate_instance(GeneratorParameters *params);

// Given a problem instance p encodes it into tokens
int problem_tokens(Problem *p, int *tokens);

// Provides the initial partial solution to the problem p. Potentially empty.
PartialSolution initial_solution(Problem *p);

// The tokens corresponding to valid guess actions
int guess_next_tokens(PartialSolution *s, Problem *p, int *tokens);

// If a guess fails, provide the alternative tokens to try next.
int alternatives_next_tokens(int guess_token, PartialSolution *s, Problem *p, int *tokens);

// Find the next action tokens according to logic rules
int logic_next_tokens(PartialSolution *s, Problem *p, int *tokens);

// Given a partial solution obtains a new partial solution by applying the given action token
PartialSolution apply_token(PartialSolution *s, Problem *p, int token);

// Checks if the solution is complete or whether errors exist
int is_complete(PartialSolution *s, Problem *p);
\end{lstlisting}

The transcripts library uses this interface to run a DFS search on top of the partial solutions to the problem and provides complete transcripts along with multi-targets for the corresponding combinatorial problem.

\section{1-in-3 SAT problem}\label{app:sat}
\paragraph{Definition.}
Given $N$ Boolean variables $x_1, \ldots, x_N$ and a Boolean formula $\phi$ with $M$ clauses in 3-Conjunctive Normal Form
\[
\phi = C_1 \land \cdots \land C_M, \quad \text{where } C_i = (\ell_{i1} \lor \ell_{i2} \lor \ell_{i3}),
\]
and each literal $\ell_{ij} \in \{x_k, \neg x_k\}$, the \emph{1-in-3 SAT} problem asks for a truth assignment to the $N$ variables such that \emph{exactly one literal in each clause is true}.

\paragraph{Instance generation.} We use the following concrete process for generating 1-in-3 SAT instances that is popular in the literature. Starting from a random assignment for the $n$ variables, we create a random clause using 3 random literals on 3 distinct variables and reject if the clause is not satisfied by exactly 1 literal. We repeat this process until $ m$ clauses are selected. In contrast to the Sudoku task, this is not guaranteed to have a unique solution but as the initial chosen assignment is a solution it is guaranteed to be solvable.

\paragraph{Transcript generation.} To generate the transcript, we make guesses by assigning a value to a random variable. We also apply the following logic rules to figure out additional assignments if possible:
\begin{itemize}
    \item \textbf{Positive inference.} If a clause has a literal that was fixed to be positive, the remaining two literals must be negative.
    \item \textbf{Negative inference.} If a clause has a literal that was fixed to be negative, the other two literals must have different values. While this does not give us a new value, it allows replacing, instances of one literal with the negation of the other. Combined with the following rule this lets us infer additional assignments.
    \item \textbf{Same variable inference.} If a clause contains a variable $x$ twice, the value of the other literal is automatically determined. If the variable appears with opposite signs, the other literal must be negative while if it appears with the same sign, the other literal must be positive and the literals corresponding to the variable $x$ must be negative.
\end{itemize}

We generate the transcript by applying the above rules whenever possible and make a guess if no additional assignment can be inferred.

Example of a 1-in-3 SAT transcript with $N=25$ and $M=15$ is provided below.\\
\fbox{%
\begin{minipage}{.98\linewidth}
\texttt{(14, -13, 24), (18, -5, -12), (7, -10, -4), (15, 5, -11), \\
(-3, -21, -25), (25, -13, -11), (-17, -7, -8), (12, 1, -16), \\
(-22, 18, -24), (-8, 19, -17), (-6, 8, -24), (-3, 2, -1), \\
(21, -22, -15), (8, 14, -4), (23, -22, -20), s, r,\\ L1, 4, r, \\
L2, 18, 12, -1, 5, 22, 11, 3, -15, -2, 16, -21, 24,
13, 25, -14, 6, 8, r, \\
L3, -17, -19, 7, 10, r, \\
L4, -9, r, \\
L5, 20, r, \\
L6, 23, r, e}
\end{minipage}
}

\paragraph{Metric.} We evaluate the performance in terms of board accuracy, as in Sudoku. In this case, a board is correct if all the clauses have exactly one satisfied literal.

\paragraph{Results.} In the 1-in-3 SAT problem for $N=25$ and $M=15$ we achieve 99.10\% board accuracy. During evaluation, for each variable we keep the last assigned value in the generated sequence.

\end{document}